%% file: main.tex
\documentclass[10pt,twocolumn,letterpaper,dvipsnames,table]{article}
\usepackage[pagenumbers]{wacv} % To force page numbers, e.g. for an arXiv version

\usepackage{graphicx}
\usepackage{amsmath}
\usepackage{amssymb}
\usepackage{booktabs}

\usepackage[pagebackref,breaklinks,colorlinks]{hyperref}

\usepackage{xcolor}
\usepackage[export]{adjustbox}
\usepackage{colortbl}
\usepackage{graphicx}
\usepackage{booktabs}

\usepackage{array}      % nicer column spacing (optional)
\usepackage{makecell}
\usepackage{multirow}
\usepackage{subcaption}
\usepackage{threeparttable}
\usepackage{xcolor} % you already have this
\definecolor{best}{HTML}{F59E0B}      % amber
\definecolor{second}{HTML}{10B981}    % emerald

\newcommand{\best}[1]{\textcolor{best}{\textbf{#1}}}
\newcommand{\secondbest}[1]{\textcolor{second}{\textbf{#1}}}

\definecolor{streamrfgray}{gray}{0.45}
\definecolor{streamrfbg}{gray}{0.95} % optional light tint
\definecolor{egsbg}{HTML}{FFF7ED}
\definecolor{baselinebg}{gray}{0.97}

\newcommand{\egsrow}{\rowcolor{egsbg}}
\newcommand{\baselinerow}{\rowcolor{baselinebg}}
\newcommand{\xmark}{\textcolor{red!60!black}{\ensuremath{\times}}}
\newcommand{\cmark}{\textcolor{green!45!black}{\ensuremath{\checkmark}}}
\usepackage[accsupp]{axessibility}  % Improves PDF readability for those with disabilities.

\usepackage{graphicx}
\usepackage{amsmath}
\usepackage{amssymb}
\usepackage{amsthm}
\usepackage{booktabs}
\usepackage{array}
\usepackage{makecell}
\usepackage{multirow}
\usepackage{xcolor}
\usepackage{algorithm}
\usepackage{algpseudocode}
\usepackage[pagebackref,breaklinks,colorlinks]{hyperref}
\usepackage[capitalize]{cleveref}
\crefname{section}{Sec.}{Secs.}
\Crefname{section}{Section}{Sections}
\Crefname{table}{Table}{Tables}
\crefname{table}{Tab.}{Tabs.}
\Crefname{equation}{Eq.}{Eqs.}
\crefname{equation}{Eq.}{Eqs.}
\crefname{assumption}{Assumption}{Assumptions}
\Crefname{assumption}{Assumption}{Assumptions}
\crefname{proposition}{Proposition}{Propositions}
\Crefname{proposition}{Proposition}{Propositions}
\crefname{algorithm}{Alg.}{Algs.}
\Crefname{algorithm}{Algorithm}{Algorithms}

\newtheorem{proposition}{Proposition}
\newtheorem{assumption}{Assumption}

\usepackage[pagebackref]{hyperref}

\usepackage{orcidlink}

\usepackage[capitalize]{cleveref}
\crefname{section}{Sec.}{Secs.}
\Crefname{section}{Section}{Sections}
\Crefname{table}{Table}{Tables}
\crefname{table}{Tab.}{Tabs.}

\def\wacvPaperID{17} % *** Enter the WACV Paper ID here
\def\confName{WACV}
\def\confYear{2027}

\makeatletter
\AtBeginDocument{
\let\@oldmaketitle\@maketitle
\renewcommand{\@maketitle}{\@oldmaketitle
  \vspace{-15pt}
  \begin{center}
  \captionsetup{type=figure} % gives figure context outside a float

  \includegraphics[width=\linewidth]{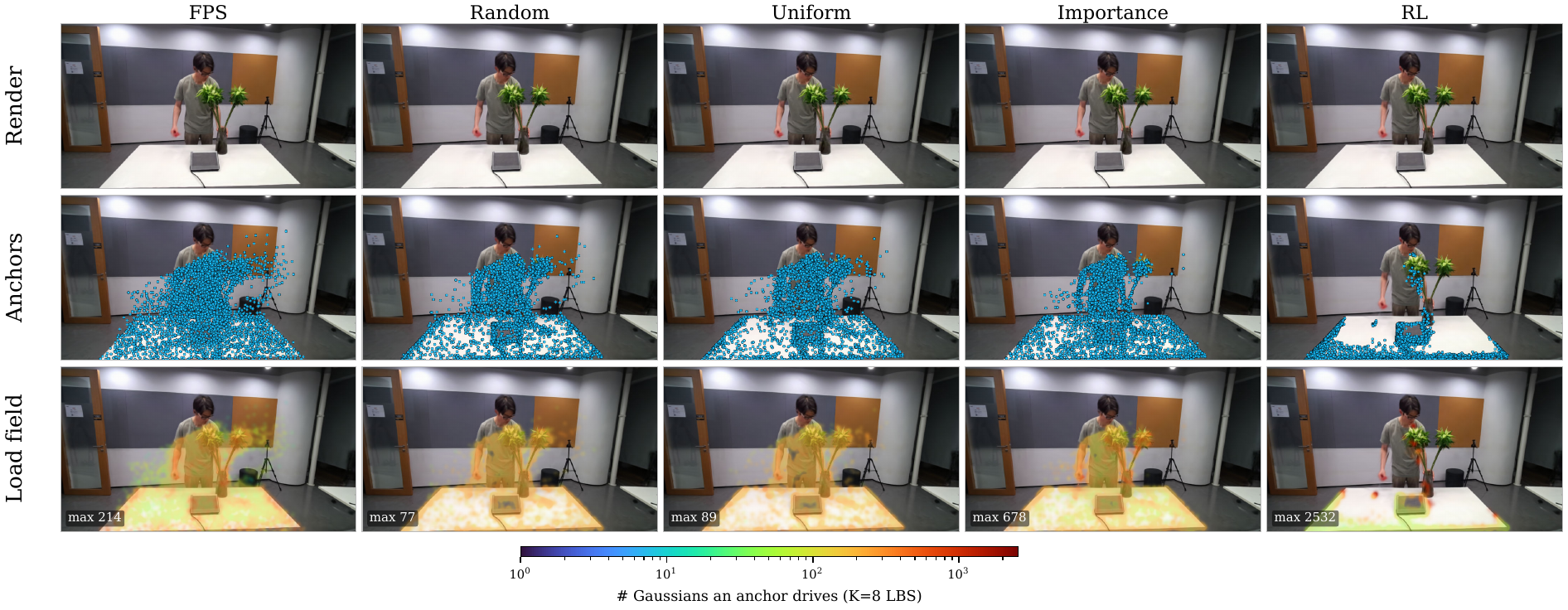}

\vspace{-3pt}
\caption{\small \textbf{Does it matter which Gaussians you pick in 4D Gaussian streaming?} For one held-out MeetingRoom \textit{Trimming} key-frame, each column is a selector and the rows are the refined render, the back-projected selected anchors, and the smooth $K{=}8$ LBS load field. The renders are identical, yet the per-anchor load differs sharply: the learned policy and the opacity-scale heuristic concentrate it on a few hot points while random and uniform spread it.}
\label{fig:fig_teaser}

  \end{center}
  \vspace{10pt}
 }
}
\makeatother

\begin{document}

%%%%%%%%% TITLE - PLEASE UPDATE
\title{Does Anchor Selection Matter in 4D Gaussian Streaming?}

\author{Ashim Dahal \quad Rabab Abdelfattah \quad Nick Rahimi\\
School of Computing Sciences and Computer Engineering\\
University of Southern Mississippi, Hattiesburg, MS\\
{\tt\small \{ashim.dahal, rabab.abdelfattah, nick.rahimi\}@usm.edu}
}
\maketitle

\input{sections/0_abstract.tex}
\input{sections/1_introduction}
\input{sections/2_related_works}
\input{sections/3_methodology}
\input{sections/4_implementation}

\input{sections/5_exerpiments}
\input{sections/6_conclusions}
\input{sections/limitations.tex}
\clearpage

%%%%%%%%% REFERENCES
{\small
\bibliographystyle{ieee_fullname}
\bibliography{main}
}

\clearpage
\appendix
\section{Appendix: Supplementary Material}

\input{supplementary_materials.tex}
\end{document}

%% file: sections/0_abstract.tex
\begin{abstract}
Anchor-driven 4D Gaussian streaming methods such as Instant Gaussian Stream (IGS) update a dynamic scene each frame from a compact set of Gaussian anchors, chosen by default with Farthest Point Sampling (FPS) at a fixed budget of $8{,}192$. Because these anchors act as control points that drive the whole scene through linear blend skinning, the rule used to choose them ought to affect reconstruction quality. We test this by holding the IGS pipeline fixed and changing only the sampler, comparing FPS, random, uniform, an opacity-scale heuristic, and a learned policy across budgets and refinement settings on N3DV and MeetingRoom. At deployment budgets the sampler has no measurable effect: a cheap random or uniform sampler at $4{,}096$ anchors matches FPS@8192 within measurement error, the default budget is over-provisioned, and the result holds on a second backbone (3DGStream). The learned policy is mixed rather than consistently better: it can improve the N3DV validation set at tight budgets, but does not give a stable cross-dataset rule, and selection is never the bottleneck because refinement dominates runtime. We release our full sweep and evaluation protocol as a sampler benchmark.
\end{abstract}

%% file: sections/1_introduction.tex
\section{Introduction}
\label{sec:introduction}

Dynamic novel-view synthesis enables free-viewpoint video, telepresence, and interactive capture of moving scenes. 3D Gaussian Splatting represents a scene as explicit anisotropic Gaussians that rasterize efficiently \cite{3dgs_og}, and recent dynamic and streaming variants extend it to time-varying scenes \cite{4dgs_og,li2024spacetimegs,e4dgs,Yang2023Deformable3G,yan2025instant}. In the streaming setting the representation is updated as each frame arrives, and the central concern is making that update cheap without losing quality.

Anchor-driven methods such as Instant Gaussian Stream (IGS) update each frame from a compact subset of Gaussian anchors \cite{yan2025instant}. These anchors are not only the Gaussians that move. They act as the control points of a linear blend skinning deformation, where every Gaussian is displaced by a distance-weighted blend of its $K$ nearest anchors' predicted rigid transforms, so the anchor set drives the motion of the entire scene. By default IGS selects the anchors with Farthest Point Sampling (FPS) \cite{qi2017pointnetplusplus,gonzalez1985clustering} at a fixed budget of $8{,}192$.

Because the anchors steer the whole deformation, the rule used to place them should matter, and a better sampler, whether well-spread, content-aware, or learned, should give a better reconstruction. We set out to confirm this. Holding the IGS reconstruction and refinement pipeline fixed and changing only the sampler, we compare FPS, random selection, uniform voxel selection, an opacity-scale importance heuristic, and a learned policy across the anchor budgets and refinement settings the pipeline actually uses, on N3DV and MeetingRoom.

The expectation does not hold. At deployment budgets the fixed samplers have no measurable effect on quality. Their spread stays below the measurement noise we calibrate, and no fixed sampler is consistently best. What turns out to be over-provisioned is the budget, not the rule. Quality saturates by a few thousand anchors, so a plain random or uniform sampler at $4{,}096$ anchors matches FPS@8192 within measurement error and halves the budget at no cost. The same picture holds on a second, structurally different backbone, 3DGStream. Because local refinement dominates runtime, selection is never the bottleneck, and swapping the sampler at a fixed budget changes end-to-end time by about one percent. We also report a practical caveat for systems builders: an exact GPU port of FPS is slower than the optimized CPU routine IGS already uses, so the obvious GPU port does not help.

We include a learned policy as a case study rather than a proposed method, since it is the one sampler that behaves differently from the rest. Its behavior is not a clean win or a clean failure: it improves the N3DV validation set at tight budgets, but this does not carry over to the MeetingRoom scenes, and its MeetingRoom $4{,}096$-anchor parity point is a capped-set operating point rather than evidence of ranking. This is consistent with the rest of the study: when the choice of anchors barely affects quality, a learned scorer has little stable signal to exploit. Our study is limited to IGS-style anchor-driven streaming, and our broad absolute claims are anchored on the MeetingRoom scenes, where they sit in the normal dynamic Gaussian range.

Our contributions are:
\begin{itemize}
  \item \textbf{A controlled study of anchor selection.} Under one reconstruction code path we compare five samplers at four budgets and four refinement settings, and judge differences with confidence intervals over held-out frames rather than a hand-set threshold.
  \item \textbf{A leaner default.} We show that the FPS@8192 default is over-provisioned, and that a cheap sampler at $4{,}096$ anchors matches it within measurement error.
  \item \textbf{A cost breakdown.} We show that selection is not the runtime bottleneck, and that an exact GPU port of FPS is slower than the optimized CPU routine IGS already uses.
  \item \textbf{A released benchmark and a learned-sampler case study.} We release the sweep and protocol, and analyze a learned sampler whose gains on the N3DV validation set do not translate into a stable cross-dataset selection rule.
\end{itemize}

%% file: sections/2_related_works.tex
\section{Related Work}
\label{sec:related_work}

\noindent\textbf{Novel view synthesis and efficient scene representations.}
Novel view synthesis has been widely studied with implicit neural representations such as NeRF and dynamic variants that model time via deformation or conditioning \cite{mildenhall2021nerf,dnerf,park2021nerfies,Park2021HyperNeRF,nsff-dynamic}, supported by multi-view video benchmarks for free-viewpoint video \cite{neu3dv,technicolor,zju-mocap,zju-light}. Complementary approaches improve efficiency via factorized space--time fields \cite{Cao2023HexPlaneAF,fridovichkeil2023kplanes} and explicit point-based representations; in particular, 3D Gaussian Splatting (3DGS) enables real-time rendering with differentiable rasterization and has become a strong foundation for interactive view synthesis \cite{3dgs_og}, with ongoing work improving efficiency and scalability \cite{TuYing2025speede3dgs,HansonTuPUP3DGS,yuan2025differentiablehardware3dgs}.

\noindent\textbf{Dynamic Gaussian splatting.}
Dynamic extensions of splatting model motion and time-varying appearance using explicit Gaussian primitives \cite{4dgs_og,li2024spacetimegs,e4dgs,Yang2023Deformable3G}.
Recent work explores broader capture settings and additional priors, including multi-view consistency, physics, or hybrid representations \cite{Jiang2023Consistent4DC3,kim2025physgaia,oh2025hybrid,wang2025freetimegs,rotorgs,deng2025dynasplat,yin2025splat4d,li2025mtgs,zheng2024pku,lu2024diva,kirschstein2023nersemble,shapovalov2023replay}.
These methods can achieve strong quality, but many remain offline or do not explicitly target strict streaming budgets.

% methodology figure declared early (pushed up) so [!t] lands it atop page 3.
\begin{figure*}[!t]
  \centering
  \includegraphics[width=\linewidth]{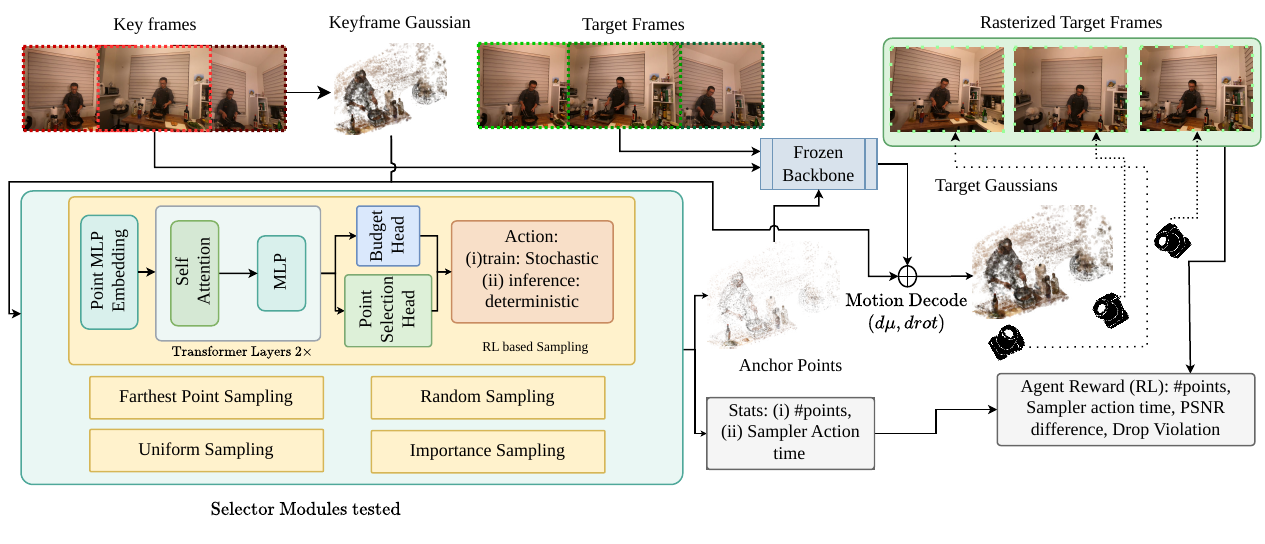}
  \caption{\textbf{Methodology of study.} We hold the IGS streaming stack fixed (candidate construction, graph building, reconstruction, rasterization, and high-quality local refinement) and vary only the anchor-selection rule at a matched budget $\kappa$. Five selectors run under this single code path: FPS (the IGS default), random, uniform voxel selection, an opacity-scale importance heuristic, and a learned contextual-bandit policy (RL), isolating the effect of the rule on quality and latency.}
  \label{fig:methodology}
\end{figure*}

\noindent\textbf{Streaming and online dynamic reconstruction.}
Streaming radiance fields have been studied via incremental learning and residual representations \cite{li2022streamrf,wang2023rerf}.
Gaussian streaming methods pursue low-latency updates either by training Gaussians on-the-fly \cite{sun20243dgstream} or by using generalizable feed-forward motion prediction driven by a compact set of anchors \cite{yan2025instant,li2025gifstream,streaminglod}.
IGS is particularly relevant as it standardizes an anchor-driven streaming pipeline and motivates plug-in improvements to anchor selection under fixed budgets \cite{yan2025instant}.

\noindent\textbf{Anchor selection under budget constraints.}
Anchor-driven streaming critically depends on selecting a subset of Gaussians under strict compute and point budgets.
Existing systems typically use hand-crafted heuristics, most notably farthest point sampling (FPS), to promote spatial coverage \cite{qi2017pointnetplusplus,yan2025instant}.
While learned selection under resource constraints has been explored elsewhere \cite{wu2018blockdrop,wang2018skipnet,he2018amc,dovrat2019learningtosample,lang2020samplenet}, applying it to anchor-based Gaussian streaming with \emph{joint budget+subset} constraints remains underexplored.
A learned, context-aware policy is one natural instance of this idea, and we include such a policy in our study, but as one selector among several rather than as a proposed method; our goal is to measure whether \emph{any} selection rule, learned or hand-crafted, changes the outcome within the IGS streaming interface \cite{yan2025instant}.

\noindent\textbf{Learning policies for resource allocation.}
Reinforcement learning and contextual bandits provide principled tools for allocating limited computation based on observed context \cite{suttonbarto2018rl,li2010contextualbandit,langford2007epochgreedy}.
Policy-gradient methods offer a general mechanism for optimizing stochastic selection policies \cite{williams1992reinforce}.
Attention-based set encoders further provide a natural architecture for scoring and selecting elements from unordered candidate sets \cite{vaswani2017attention,lee2019settransformer}.
The learned selector (RL) we study combines these perspectives; we evaluate it on equal footing with FPS and simple non-FPS samplers under matched budgets, and find that it provides no advantage in this setting (\cref{sec:postmortem}).

%% file: sections/3_methodology.tex
\section{Methodology}
\label{sec:methodology}

Our study isolates anchor selection as a controlled intervention inside an unchanged anchor-driven 4D Gaussian streaming pipeline (IGS) \cite{yan2025instant}. Only the selection rule changes. The reconstruction, rasterization, and local refinement are identical across all conditions, so quality and runtime differences are attributable to the selector rather than to a different renderer, motion model, or refinement procedure.

\subsection{Budgeted Anchor Selection}
\label{sec:methodology:setup}
For frame $t$, let the available Gaussian set be $\mathcal{G}_t=\{(\mathbf{x}_j,o_j,\mathbf{s}_j)\}_{j=1}^{N_t}$, with center $\mathbf{x}_j\in\mathbb{R}^3$, opacity $o_j$, and scale $\mathbf{s}_j\in\mathbb{R}^3$. We mask Gaussians to the scene bounding box to form the candidate set $\mathcal{C}_t=\{c_m\}_{m=1}^{M_t}$. This set is not density-equalized. Across our scenes its local Gaussian density varies by $15$ to $18\times$, and the four classical selectors (FPS, random, uniform, importance) operate on it directly, so the comparison among them tests intrinsic placement on a raw, non-uniform pool rather than on a pre-equalized one. The learned policy scores an internally voxel-deduplicated candidate set \cite{niessner2013voxelhashing} and takes the top-$\kappa$ anchors. For the MeetingRoom post-mortem, the deployed checkpoint has $M_{\max}{=}4096$ and is run at $\kappa{=}4096$, so that specific operating point returns the capped set; a larger-cap MeetingRoom checkpoint, $M_{\max}{=}16{,}384$, is used only in \cref{tab:postmortem} to test ranking below the cap. The N3DV validation rows use the N3DV checkpoint at each reported budget. A selector returns a subset $\Omega_t\subseteq\{1,\ldots,M_t\}$ of size $\kappa$, the anchor budget, and the selected anchors are passed to the unchanged IGS graph construction and refinement path. We sweep the four budgets $\kappa\in\{1024,2048,4096,8192\}$ for the quality grid; $\kappa{=}16{,}384$ appears only in the forward-latency microbenchmark (\cref{fig:fig_latency}) and the larger-cap post-mortem (\cref{tab:postmortem}). FPS@8192 is the IGS default and our per-dataset reference.

The anchors are not merely the Gaussians that move. IGS deforms the whole scene by linear blend skinning (LBS) from the anchors: each Gaussian $g$ is displaced by a distance-weighted blend of its $K{=}8$ nearest anchors' predicted rigid transforms,
\begin{align}
\mathbf{x}'_g=\sum_{a\in\mathcal{N}_K(g)} w_{ga}\,\big[R_a(\mathbf{x}_g-\mathbf{x}_a)+\mathbf{t}_a+\mathbf{x}_a\big],
\label{eq:lbs}
\end{align}
with blend weights $w_{ga}$ given by a softmax over negative neighbor distances, forming a partition of unity ($\sum_a w_{ga}{=}1$), exactly as in IGS \cite{yan2025instant}. The anchor set therefore parameterizes the entire motion field, so where the anchors are placed is a priori consequential: selection is a load-bearing modeling choice, not a peripheral preprocessing step. This is why its near-irrelevance (\cref{sec:invariance}) is a non-trivial finding. One consequence matters for capped learned checkpoints: ranking occurs below their cap, while the classical selectors draw from the full pool $M_t \gg \kappa$ at every budget we test.

\subsection{Selectors Under Test}
\label{sec:methodology:selectors}
All selectors share the bounding-box candidate front end and differ only in how they choose $\kappa$ anchors: the classical four from the full in-bbox pool $M_t$, and RL from its internally capped set.
\begin{itemize}
  \item \textbf{FPS} (IGS default): farthest point sampling for spatial coverage. To avoid a strawman, we use the same optimized bucketed kd-line FPS that IGS uses (\texttt{fpsample}'s \texttt{bucket\_fps\_kdline\_sampling}), an approximate near-linear FPS, not a naive $O(N\kappa)$ loop.
  \item \textbf{Random}: a uniformly random subset. It is stochastic, and its mean PSNR varies by at most $0.06$~dB across three seeds.
  \item \textbf{Uniform}: a deterministic uniform stride over the candidate ordering.
  \item \textbf{Importance}: multinomial sampling weighted by opacity times scale, a cheap content-aware heuristic.
  \item \textbf{RL (learned)}: a contextual-bandit policy that scores each candidate from a $13$-dimensional descriptor (geometric, opacity and scale, projected visual, and motion channels) with a point-wise MLP and a 2-layer Transformer set encoder \cite{vaswani2017attention,lee2019settransformer}, then takes the top-$\kappa$ scores. It is trained with a REINFORCE objective \cite{williams1992reinforce,li2010contextualbandit} against a refinement-aware reward and is exportable to TensorRT. RL receives no privileged status in this study; full training details, reward weights, and the warm-start schedule are in the supplement.
\end{itemize}
RL additionally has a budget head that could predict $\kappa$ per frame. We do not evaluate adaptive budgeting in this paper (all conditions use a fixed $\kappa$) and treat it as out of scope.

%% file: sections/4_implementation.tex
\section{Experimental Setup}
\label{sec:experiments}

\subsection{Datasets}
\label{subsec:data_and_splits}
We study two datasets, N3DV and MeetingRoom. Our broad absolute claims are anchored on \textbf{MeetingRoom} \cite{li2022streamrf} (\textit{Discussion} and \textit{Trimming}; \textit{Vrheadset} is excluded due to severe corruption in the released sequence), whose absolute PSNRs ($30$--$33$~dB) are in the normal dynamic-GS range and where we run the full budget$\times$refinement grid. We additionally use \textbf{N3DV} \cite{neu3dv} for two checks: the four-scene relative grid (\textit{Coffee Martini}, \textit{Cook Spinach}, \textit{Flame Salmon}, \textit{Flame Steak}) and the held-out validation-set metrics reported in \cref{tab:main_results}. The learned policy is trained only on N3DV. In the high-refinement N3DV configuration, key-frame test-time refinement adds a near selector-uniform offset ($+1.2$~dB to every selector alike; \cref{sec:results:keyframe}); because all comparisons are made within one fixed configuration and the input cameras are disjoint from the held-out evaluation cameras (\cref{sec:impl:eval}), this does not favor one selector over another.

\subsection{Matched-Budget, Single-Code-Path Protocol}
\label{sec:impl:eval}
Every condition runs through one reconstruction/refinement code path; only the selector and the budget $\kappa$ change. Evaluation is \textbf{held-out-camera}: the model receives a fixed set of input cameras (disjoint from the evaluation cameras) and renders the held-out cameras, scored against ground truth, across all streamed frames.

\noindent\textbf{Refinement strength is our primary axis.} Local refinement at evaluation is a key-frame--guided test-time optimization run every fifth frame for $r$ iterations. We sweep $r\in\{0,5,25,50\}$ to measure how the effect of the selection rule depends on refinement, in addition to the deployed high-quality setting.

\noindent\textbf{Key-frame refinement is selector-uniform (no method-favoring leakage).} At key frames the test-time optimization uses the captured views, which raises the key-frame PSNR by a roughly constant amount. We verified this offset is \emph{indistinguishable across selectors} (within CIs: $+1.20$~dB FPS, $+1.22$ RL, $+1.23$ random) and affects only $20\%$ of frames ($\approx{+}0.24$~dB on the average), so it never biases a cross-selector comparison; we additionally report steady-state (non-key-frame) PSNR.

\noindent\textbf{Metrics and normalization.} We report PSNR ($\uparrow$), LPIPS/DSSIM ($\downarrow$), end-to-end time per target frame, and selection-stage latency. For a method $m$, speedups and quality deltas are normalized to the same-dataset FPS@8192 reference:
\begin{align}
S_{\star}(m)&=\frac{T_{\star}(\mathrm{FPS@8192})}{T_{\star}(m)},\nonumber\\[2pt]
\Delta\mathrm{PSNR}(m)&=\mathrm{PSNR}(m)-\mathrm{PSNR}(\mathrm{FPS@8192}),
\label{eq:speedup_def}
\end{align}
for $\star\in\{\mathrm{e2e},\mathrm{sel}\}$. Because refinement dominates total runtime, selection-stage speedups do not translate linearly into end-to-end speedups.

\subsection{Evaluation Protocol}
\label{sec:impl:bootstrap}
We judge ties with a proper equivalence test. Per-frame PSNRs of adjacent streamed frames are correlated, so we use a moving-block bootstrap ($10^4$ resamples) whose block length is set from the data: the integrated autocorrelation time of per-frame PSNR is ${\approx}6$ frames on \textit{Discussion} and ${\approx}47$ on the more slowly varying \textit{Trimming}, so we take $L{=}15$ and $L{=}50$ respectively, both safely above the correlation length, with conclusions stable across $L\in[10,50]$ (supplement). Paired resampling on common frames gives a confidence interval on the PSNR gap between any two samplers, valid even for the deterministic ones (FPS, uniform, RL). On this interval we run two one-sided tests (TOST) against a smallest-effect-of-interest $\delta{=}0.25$~dB, a sub-perceptual margin: two samplers are \emph{equivalent} when the gap interval lies within $\pm\delta$. To calibrate the four-sampler spreads of \cref{sec:results:grid} we additionally bootstrap the best-minus-worst statistic under the null (four independent estimates of a single sampler), yielding the spread attributable to measurement noise alone. Random's mean PSNR varies by only $\le0.06$~dB across three seeds, well inside the equivalence margin.

\input{tables/main_results}

%% file: tables/main_results.tex
% Auto-generated by media/gen_table1.py from verified results.json (do not hand-edit numbers).
\begin{table*}[t]
\centering
\caption{\textbf{The whole study at a glance: at deployment budgets the fixed rules match FPS@8192 on both MeetingRoom scenes and the N3DV validation set; the budget, not the rule, is what matters.} Five selectors $\times$ four budgets. \best{Best} and \secondbest{second-best} per column are shaded (higher PSNR / speedup and lower LPIPS / DSSIM are better); cost is normalized to FPS@8192 on \textit{Discussion}.}
\label{tab:main_results}
\scriptsize
\setlength{\tabcolsep}{3.4pt}
\renewcommand{\arraystretch}{1.18}
\begin{tabular}{l ccc ccc ccc cc}
\toprule
& \multicolumn{3}{c}{\textbf{MeetingRoom Discussion}} & \multicolumn{3}{c}{\textbf{MeetingRoom Trimming}} & \multicolumn{3}{c}{\textbf{N3DV Val Set}} & \multicolumn{2}{c}{\textbf{Cost}} \\
\cmidrule(lr){2-4}\cmidrule(lr){5-7}\cmidrule(lr){8-10}\cmidrule(lr){11-12}
\textbf{Selector} & PSNR$\uparrow$ & LPIPS$\downarrow$ & DSSIM$\downarrow$ & PSNR$\uparrow$ & LPIPS$\downarrow$ & DSSIM$\downarrow$ & PSNR$\uparrow$ & LPIPS$\downarrow$ & DSSIM$\downarrow$ & E2E$\times$ & Smp$\times$ \\
\midrule
\multicolumn{12}{l}{\textit{Anchor budget} $\kappa = 8{,}192$\textit{\quad(deployment reference)}}\\
\baselinerow FPS \;\textit{(ref)} & \secondbest{30.932} & 0.1195 & \secondbest{0.0251} & \secondbest{32.033} & \best{0.1146} & \secondbest{0.0248} & \secondbest{31.698} & 0.1070 & 0.02309 & 1.00 & 1.00 \\
Random & \best{30.934} & 0.1192 & 0.0252 & 31.908 & 0.1155 & 0.0251 & \best{31.792} & \best{0.1056} & \best{0.02285} & 0.99 & 1.74 \\
Uniform & 30.884 & \secondbest{0.1192} & 0.0252 & 31.916 & 0.1154 & 0.0250 & 31.651 & \secondbest{0.1063} & \secondbest{0.02302} & 1.00 & 2.70 \\
Importance & 30.929 & \best{0.1191} & \best{0.0251} & \best{32.079} & \secondbest{0.1149} & \best{0.0246} & 31.582 & 0.1072 & 0.02322 & 0.99 & 2.38 \\
\egsrow RL & 30.525 & 0.1236 & 0.0264 & 31.571 & 0.1168 & 0.0254 & 31.447 & 0.1089 & 0.02368 & 0.99 & 0.77 \\
\midrule
\multicolumn{12}{l}{\textit{Anchor budget} $\kappa = 4{,}096$\textit{}}\\
FPS & \secondbest{30.925} & \secondbest{0.1194} & \best{0.0251} & 32.003 & \best{0.1137} & 0.0248 & 31.702 & 0.1068 & 0.02297 & 1.05 & 1.56 \\
Random & 30.877 & \best{0.1192} & 0.0252 & 31.985 & 0.1155 & \secondbest{0.0247} & \secondbest{31.713} & \secondbest{0.1060} & 0.02298 & 1.08 & 3.87 \\
Uniform & 30.887 & 0.1196 & 0.0253 & \secondbest{32.014} & 0.1151 & 0.0248 & 31.692 & 0.1065 & \secondbest{0.02295} & 1.07 & 4.43 \\
Importance & 30.835 & 0.1196 & 0.0253 & \best{32.056} & \secondbest{0.1146} & \best{0.0247} & 31.620 & 0.1065 & 0.02302 & 1.06 & 3.61 \\
\egsrow RL & \best{30.956} & 0.1195 & \secondbest{0.0251} & 31.931 & 0.1164 & 0.0250 & \best{32.501} & \best{0.0979} & \best{0.02121} & 1.08 & 2.25 \\
\midrule
\multicolumn{12}{l}{\textit{Anchor budget} $\kappa = 2{,}048$\textit{}}\\
FPS & \secondbest{30.900} & 0.1195 & \best{0.0252} & \best{32.074} & \best{0.1140} & \best{0.0246} & 31.646 & 0.1066 & 0.02298 & 1.09 & 2.34 \\
Random & 30.868 & \secondbest{0.1195} & 0.0253 & \secondbest{32.022} & 0.1156 & 0.0248 & 31.830 & \secondbest{0.1057} & \secondbest{0.02270} & 1.08 & 5.50 \\
Uniform & \best{30.937} & \best{0.1195} & \secondbest{0.0253} & 31.984 & 0.1152 & 0.0248 & \secondbest{31.845} & 0.1058 & 0.02282 & 1.09 & 2.90 \\
Importance & 30.749 & 0.1199 & 0.0256 & 31.967 & 0.1157 & 0.0248 & 31.614 & 0.1068 & 0.02307 & 1.09 & 4.33 \\
\egsrow RL & 30.557 & 0.1214 & 0.0260 & 31.997 & \secondbest{0.1150} & \secondbest{0.0247} & \best{33.675} & \best{0.0860} & \best{0.01928} & 1.09 & 2.25 \\
\midrule
\multicolumn{12}{l}{\textit{Anchor budget} $\kappa = 1{,}024$\textit{}}\\
FPS & \secondbest{30.860} & \secondbest{0.1199} & \secondbest{0.0253} & \best{32.164} & \best{0.1132} & \best{0.0244} & 31.699 & 0.1063 & 0.02302 & 1.10 & 3.12 \\
Random & 30.788 & 0.1203 & 0.0256 & 31.957 & 0.1157 & 0.0249 & 31.669 & 0.1061 & 0.02298 & 1.06 & 6.24 \\
Uniform & 30.777 & 0.1200 & 0.0255 & 31.894 & 0.1163 & 0.0250 & \secondbest{31.760} & \secondbest{0.1059} & \secondbest{0.02282} & 1.05 & 3.46 \\
Importance & \best{30.915} & \best{0.1196} & \best{0.0253} & \secondbest{32.050} & \secondbest{0.1152} & \secondbest{0.0246} & 31.684 & 0.1062 & 0.02295 & 1.07 & 3.33 \\
\egsrow RL & 29.805 & 0.1295 & 0.0289 & 31.278 & 0.1190 & 0.0265 & \best{33.846} & \best{0.0843} & \best{0.01891} & 1.10 & 2.45 \\
\bottomrule
\end{tabular}
\end{table*}

%% file: sections/5_exerpiments.tex
\section{Results}
\label{sec:invariance}

\subsection{Effect on Quality}
\label{sec:results:invariance}
\cref{tab:main_results} lays out the whole study in one place: five selectors at four budgets, on both MeetingRoom scenes and N3DV, with perceptual metrics and cost. The pattern is immediate. At every deployment budget the cheap rules sit on top of FPS@8192 across all three datasets while running a $2$--$6\times$ cheaper sampler, and the gaps that do appear track the \emph{budget}, not the rule. The rest of this section makes that reading precise.

At deployment budgets the samplers are interchangeable, and we can put a number on it. For each sampler we test whether its paired PSNR gap to FPS@8192 stays inside a sub-perceptual $\delta{=}0.25$~dB (two one-sided tests on the moving-block bootstrap; \cref{sec:impl:bootstrap}). Every cheap selector and the MeetingRoom learned-policy parity point pass (\cref{tab:invariance}): the widest paired gap to FPS@8192 is $0.06$~dB, smaller than the ${\sim}0.32$~dB that four \emph{identical} samplers produce from measurement noise alone (\cref{sec:results:grid}). LPIPS and DSSIM agree (cross-sampler spread $\le 0.0007$ and $\le 0.0004$). And quality saturates by $2$--$4$k anchors (\cref{fig:invariance_curve}): \textbf{a cheap random or uniform sampler at $4{,}096$ anchors is indistinguishable from FPS@8192 on both scenes} ($-0.055$ on \textit{Discussion}, $-0.064$ on \textit{Trimming}; importance closer still at $+0.019$; full Trimming frontier in the supplement). The budget was the thing to tune all along; the rule was not.

\begin{figure}[t]
  \centering
  \begin{subfigure}{0.40\linewidth}\centering
    \includegraphics[width=\linewidth]{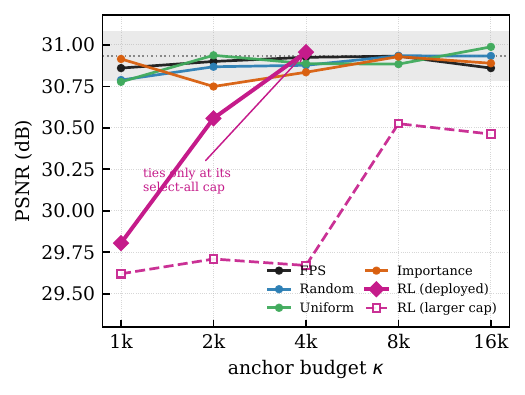}
    \caption{Quality saturates with budget.}\label{fig:invariance_curve}
  \end{subfigure}\hfill
  \begin{subfigure}{0.59\linewidth}\centering
    \includegraphics[width=\linewidth]{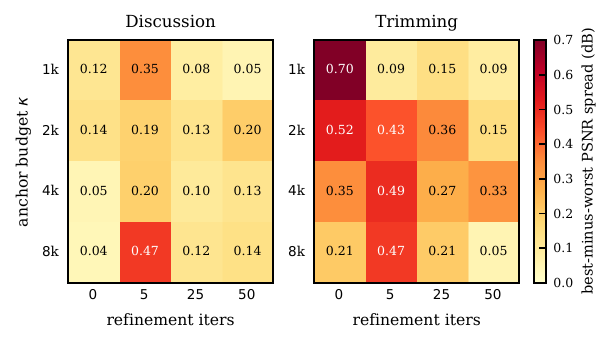}
    \caption{Best-minus-worst spread over the four standard samplers.}\label{fig:spread_grid}
  \end{subfigure}
  \caption{\textbf{The sampler barely affects quality.} \subref{fig:invariance_curve} PSNR vs.\ budget on \textit{Discussion}; the cheap rules coincide in the FPS@8192 noise band and saturate by ${\sim}4$k, while the learned policy ties only at the capped-set parity point. \subref{fig:spread_grid} best-minus-worst PSNR per budget$\times$refinement cell on both MeetingRoom scenes, below the calibrated noise floor.}
  \label{fig:quant}
\end{figure}

\begin{table}[!t]
\centering
\caption{\textbf{Equivalence test (paired CIs, TOST).} MeetingRoom \textit{Discussion}, $299$ held-out frames, moving-block bootstrap ($L{=}15$). $\Delta$ is PSNR minus FPS@8192 with its paired $95\%$ CI; \cmark{} marks statistical equivalence under TOST, \xmark{} a detectable gap. At $\kappa{\ge}4$k all cheap selectors are indistinguishable from FPS@8192; the $\kappa{=}1024$ gap is a budget effect.}
\label{tab:invariance}
\scriptsize
\setlength{\tabcolsep}{3.0pt}
\renewcommand{\arraystretch}{1.06}
\begin{tabular}{l c c c c}
\toprule
\textbf{Method} & $\kappa$ & \textbf{PSNR} $\uparrow$ & $\Delta$ \textbf{vs FPS@8192 (95\% CI)} & \textbf{Equiv.} \\
\midrule
FPS    & 8192 & 30.932 & \;\;0.000 (ref) & \cmark \\
FPS    & 4096 & 30.925 & $-0.007$ $[-0.115,+0.117]$ & \cmark \\
Random & 8192 & \secondbest{30.934} & $+0.002$ $[-0.082,+0.112]$ & \cmark \\
Random & 4096 & 30.877 & $-0.055$ $[-0.201,+0.096]$ & \cmark \\
Uniform & 4096 & 30.887 & $-0.045$ $[-0.182,+0.093]$ & \cmark \\
Importance & 8192 & 30.929 & $-0.003$ $[-0.137,+0.124]$ & \cmark \\
\egsrow RL & 4096 & \best{30.956} & $+0.024$ $[-0.080,+0.142]$ & \cmark \\
\bottomrule
\end{tabular}
\end{table}

\begin{figure*}[t]
  \centering
  \includegraphics[width=\linewidth]{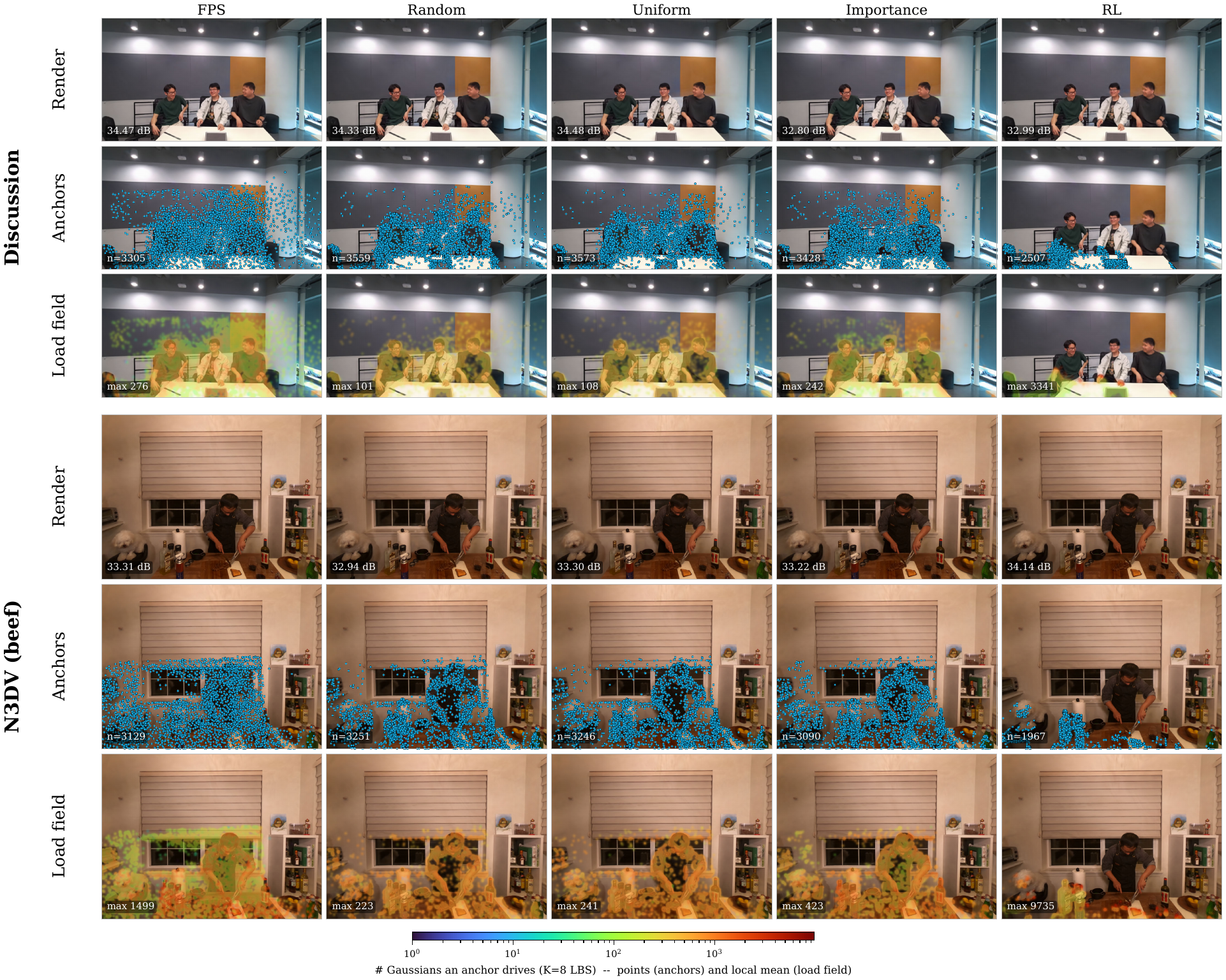}
  \caption{\textbf{Different anchors, identical reconstruction.} MeetingRoom \textit{Discussion} (top) and an N3DV scene (bottom); each block shows, per selector, the render, the back-projected anchors, and the $K{=}8$ LBS load field. Renders are indistinguishable; the per-anchor load differs sharply, the learned policy and the opacity-scale heuristic concentrating it while random and uniform spread it. \emph{More in the supplement.}}
  \label{fig:backproject}
\end{figure*}

\begin{figure}[!t]
  \centering
  \includegraphics[width=\linewidth]{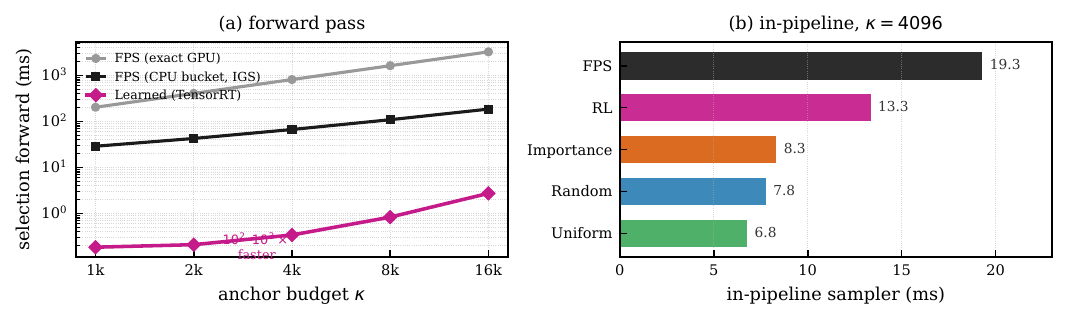}
\caption{\textbf{The cost of selection.} \emph{(a)} Latency vs.\ budget (A100): The TensorRT selector is $10^2$--$10^3\times$ faster than FPS; exact GPU FPS is slower than IGS's CPU FPS. \emph{(b)} Sampler-stage cost at $\kappa{=}4096$: FPS is costliest, followed by the learned scorer and cheap rules. Selection is a negligible slice of the ${\sim}0.4$\,s/frame.}
  \label{fig:fig_latency}
\end{figure}
\subsection{Scale of the Effect}
\label{sec:results:grid}
How general is this? We run the full budget$\times$refinement sweep on N3DV and both MeetingRoom scenes and plot the best-minus-worst PSNR over FPS/random/uniform/importance in every cell (\cref{fig:spread_grid}; full matrices in the supplement). The right yardstick for these spreads is a noise floor: how far apart do four \emph{identical} samplers land when each is estimated from the same noisy frames? We bootstrap exactly that: median $0.32$~dB on \textit{Discussion} and $0.86$~dB on \textit{Trimming} (larger because Trimming's PSNR is more autocorrelated, leaving fewer effective frames; \cref{sec:impl:bootstrap}). The four \emph{real} samplers disagree by \emph{less}: median $0.13$ ($\le 0.47$) on \textit{Discussion}, $0.30$ ($\le 0.70$) on \textit{Trimming}. The observed spread falls at the $6$th and $2$nd percentile of the null, so measurement noise alone produces a spread this large in $94\%$ and $98\%$ of resamples. Swapping the sampler moves quality less than re-rolling the measurement noise, so \textit{Trimming}'s larger spread is just its larger floor, not real sensitivity. N3DV agrees (median $0.06$). What little structure remains is small and unexploitable: it peaks at the tight-budget/no-refinement corner and fades as the budget grows, and no sampler wins twice. FPS tops \textit{Trimming} at $\kappa{=}4096$ ($+0.27$) and comes \emph{last} at $\kappa{=}8192$. No rule, fixed or learned, is worth preferring over a cheap one.

\subsection{Backbone Generalization}
\label{sec:results:secondbackbone}
IGS deforms the whole scene through an LBS blend of anchor transforms, so one might worry the sampler's irrelevance is special to that mechanism. We therefore repeat the test on \textbf{3DGStream}~\cite{sun20243dgstream}, a structurally different streaming backbone that updates the scene by gradient-driven densification through a neural transform cache rather than anchor-blended skinning. Here ``selection'' is which Gaussians the per-frame update acts on; the native rule is gradient-magnitude thresholding, and we swap in random and FPS at a \emph{matched budget} under one code path, exactly as before. On $50$ streamed frames of an N3DV scene, at quality in 3DGStream's normal range, the three rules are statistically indistinguishable (\cref{tab:secondbackbone}): a $0.02$~dB best-minus-worst spread, with every paired moving-block interval straddling zero. Selection invariance is not special to LBS, then: a second backbone with an entirely different update mechanism is just as indifferent to the sampler. We report this as a confirmatory check on one scene, not a full second grid.

\input{tables/secondbackbone}

\subsection{Key-frame Refinement Affects All Samplers Equally}
\label{sec:results:keyframe}
We confirm the result is not an artifact of the eval protocol. The per-key-frame test-time refinement (every $5$th frame) raises those frames' PSNR by a near-constant amount that is \emph{indistinguishable across selectors} (within CIs: $+1.20$~dB FPS, $+1.22$ RL, $+1.23$ random), contributing ${\sim}{+}0.24$~dB to the average and never biasing a cross-selector comparison. Steady-state (non-key-frame) PSNR preserves the same ordering and equivalences.

\section{Cost and Learned-Policy Analysis}
\label{sec:analysis}

\subsection{The Cost of Selection}
\label{sec:cost}
If the rule does not change quality, only its cost matters; three regimes are easy to conflate. \textbf{(i) Forward pass.} A learned selector exported to TensorRT runs in $0.19$--$2.74$~ms for $\kappa{=}1024$--$16{,}384$ (\cref{fig:fig_latency}; full table in the supplement), pool-independent and scaling only with the output budget. \textbf{(ii) Full in-pipeline stage.} With feature extraction the learned sampler costs $13.4$~ms at $\kappa{=}4096$, \emph{slower} than uniform ($6.8$), random ($7.8$), or importance ($8.3$), so it does not reduce sampler-stage cost. \textbf{(iii) FPS on the GPU.} An \emph{exact} GPU FPS (\texttt{torch\_cluster.fps}) is $7$--$18\times$ slower than IGS's \emph{approximate} bucketed CPU FPS (\cref{fig:fig_latency}); the obvious drop-in does not help.

End to end (\textit{Discussion}, per frame), swapping the selector at a matched budget moves time by ${\sim}1\%$ ($0.397$ vs.\ $0.401$\,s); the few-percent saving comes from \emph{halving} the budget, not the rule. \textbf{Deployment guidance:} use the cheapest selector (random/uniform) at ${\sim}4$k anchors and spend the budget on refinement, which dominates runtime.

\subsection{The Learned Policy}
\label{sec:postmortem}
We now examine the one learned policy (RL) we test, a contextual-bandit ranker trained on N3DV; we present this as a case study, not a general verdict on learned selection. Two facts are relevant.

\noindent\textbf{The MeetingRoom parity row is a capped-set operating point.} The deployed MeetingRoom checkpoint caps candidates at $M_{\max}=4096$ and is run at $\kappa=4096$, so top-$\kappa$ returns the capped set. \cref{tab:postmortem} makes this explicit: a larger-cap MeetingRoom checkpoint ($M_{\max}=16384$), evaluated at $\kappa=4096$, scores $29.7$~dB, which is $1.26$~dB below FPS@8192. The parity row is therefore not evidence that learned ranking transfers on MeetingRoom.

\noindent\textbf{Where it ranks, it does not transfer reliably.} Below the cap the learned scorer trails the cheap rules on \textit{Discussion} (\cref{tab:invariance}: $-0.38$~dB at $\kappa{=}2048$ and $-1.13$~dB at $\kappa{=}1024$ vs.\ FPS@8192), while improving the N3DV validation set at tight budgets (\cref{tab:main_results}). The policy is therefore a dataset-specific diagnostic, not a stable replacement rule. \cref{sec:results:grid} says why: when fixed-rule selection barely moves quality, a learned ranker has little stable signal to exploit outside the regime on which it was trained.

\begin{table}[!t]
\centering
\caption{\textbf{Post-mortem: the MeetingRoom parity row is capped-set.} At $M_{\max}{=}\kappa{=}4096$ the policy returns the capped set; with a larger cap it falls below FPS@8192 on \textit{Discussion}.}
\label{tab:postmortem}
\scriptsize
\setlength{\tabcolsep}{4.0pt}
\renewcommand{\arraystretch}{1.05}
\begin{tabular}{l c c c}
\toprule
\textbf{Configuration} & $M_{\max}$ & $\kappa$ & \textbf{PSNR} (\,$\Delta$ vs FPS@8192\,) \\
\midrule
FPS reference            & n/a   & 8192 & \secondbest{30.93} \;\;(0.00) \\
RL, $\kappa=M_{\max}$ (capped set) & 4096  & 4096 & \best{30.96} \;\;($+0.03$) \\
\egsrow RL, ranks below cap         & 16384 & 4096 & 29.67 \;\;($-1.26$) \\
\egsrow RL, ranks below cap         & 16384 & 8192 & 30.53 \;\;($-0.40$) \\
\bottomrule
\end{tabular}
\end{table}

%% file: tables/secondbackbone.tex
% Compact main-paper version of the 3DGStream cross-backbone table.
% \input this in Section 5.3 (Backbone Generalization). Self-contained (booktabs).
\begin{table}[t]
\centering
\caption{\textbf{Selection invariance holds on a second backbone (3DGStream).} Mean PSNR over $50$ matched streamed frames of an N3DV scene (filter-corrected init, normal quality range), matched budget, one code path. $\Delta$ and paired $95\%$ CI (moving-block bootstrap, $L{=}10$) are vs.\ the native gradient rule. A confirmatory single-scene check, not a second grid.}
\label{tab:secondbackbone}
\footnotesize
\setlength{\tabcolsep}{5pt}
\renewcommand{\arraystretch}{1.05}
\begin{tabular}{l c c c}
\toprule
\textbf{Selection rule} & \textbf{PSNR} $\uparrow$ & $\Delta$ vs.\ grad & \textbf{95\% CI} \\
\midrule
Gradient (native) & 32.69 & \;\;\;0.00 & n/a \\
Random            & 32.70 & $+0.01$ & $[-0.04,+0.04]$ \\
FPS               & 32.71 & $+0.02$ & $[-0.02,+0.06]$ \\
\midrule
\multicolumn{4}{l}{best$-$worst spread $=0.02$~dB; all paired intervals contain $0$} \\
\bottomrule
\end{tabular}
\end{table}

%% file: sections/6_conclusions.tex
\section{Conclusion}
\label{sec:conclusion}

We asked whether the anchor-selection rule matters in anchor-driven, IGS-style 4D Gaussian streaming, and answered it with a controlled, single-code-path comparison. The anchors act as control points that drive the whole scene through linear blend skinning, so one would expect their placement to matter, yet across N3DV and the two MeetingRoom scenes, all budgets ($1$k--$8$k), and all refinement settings ($0$--$50$), the gap between the best and worst standard sampler is small (median $0.06$~dB on N3DV, $0.13$ and $0.30$~dB on \textit{Discussion} and \textit{Trimming}) and below the measurement-noise floor we calibrate on every scene. No sampler is consistently best, the larger gaps appear only at the smallest budget with no refinement, and the one learned policy we studied is mixed: it improves the N3DV validation set at tight budgets, but does not transfer as a stable replacement rule. What is over-provisioned is the budget, not the rule: a cheap random or uniform sampler at $4{,}096$ anchors matches the FPS@8192 default within measurement error, and since refinement dominates runtime, selection is never the bottleneck (an exact GPU FPS is in fact $7$--$18\times$ slower than the optimized CPU routine, and the learned scorer is slower than random). The practical recommendation is therefore simple: use the cheapest sampler at roughly half the default budget, and spend the saved cost on refinement, which is what actually limits quality. We release the full sweep and evaluation protocol as a sampler benchmark.

%% file: sections/limitations.tex
\section*{Limitations}

\textbf{Scope of the backbone and datasets.} Our findings are scoped to anchor-driven (IGS-style) 4D Gaussian streaming: our full budget$\times$refinement grid is on IGS, and our broad absolute claims are anchored on MeetingRoom (\textit{Discussion}, \textit{Trimming}). We additionally confirm the finding on a second, structurally different backbone, 3DGStream, which is densification-driven rather than anchor-skinned (\cref{sec:results:secondbackbone}), where the selection rules are again statistically indistinguishable, as a single-scene check, not a full second grid; we do not claim cross-backbone universality, and replicating the complete grid on more streaming-GS designs remains the natural next step. In the high-refinement N3DV configuration, key-frame test-time refinement adds a \emph{selector-uniform} offset ($+1.2$~dB to every selector alike, ${\sim}{+}0.24$~dB on the average), so it cannot bias a cross-selector comparison within the fixed validation configuration.

\textbf{Scope of the learned-policy finding.} The learned-policy result is established for one contextual-bandit set-transformer and one content-aware heuristic (importance), in the heavy-refinement regime. We do not claim that \emph{no} learned selector could ever help; the policy improves the N3DV validation set at tight budgets, but does not give a stable replacement rule across the evaluated datasets. We also do not evaluate the policy's per-frame adaptive-budget head; it is out of scope here and left to future work.

\textbf{Cost measurement.} Our best-case latency isolates the network forward pass; the full in-pipeline stage additionally incurs feature extraction, which we report separately and do not attempt to fuse or optimize. End-to-end gains are small precisely because refinement dominates: a same-budget selector swap is about $1\%$, and halving the budget saves $4.3$--$8.0\%$, which is itself one of our findings rather than a limitation of the method under test.

%% file: supplementary_materials.tex
\noindent This supplement deepens the analysis of the experiments reported in
the main paper; it introduces no new datasets and no retuned method. It contains
(i)~a derivation that explains \emph{why} the anchor sampler barely changes the
render, and that predicts the single regime where a coverage-aware sampler does
help (\cref{sec:supp_theory}); (ii)~the algorithms behind our matched-budget
harness, the five selectors, and the moving-block bootstrap
(\cref{sec:supp_algos}); (iii)~the complete budget$\times$refinement spread grid
condensed into the teaser of the main paper (\cref{sec:supp_grid}); (iv)~the
per-selector quality frontiers and the Trimming equivalence test for both
MeetingRoom scenes (\cref{sec:supp_frontier}); (v)~the relative N3DV spread grid
and per-selector frontier (\cref{sec:supp_n3dv}); (vi)~the full cross-backbone
(3DGStream) selector comparison (\cref{sec:supp_3dgs}); (vii)~the bootstrap
procedure in full (\cref{sec:supp_bootstrap}); and (viii)~hyperparameters and
qualitative diagnostics (\cref{sec:supp_hyperparams,sec:supp_qualitative}). As in the main
paper, FPS@8192 is the per-scene reference and differences are judged against a
\emph{measured} confidence interval over held-out frames rather than an asserted
threshold.

% =====================================================================
\section{Why Anchor Placement Barely Changes the Render}
\label{sec:supp_theory}

The main paper's central empirical observation is that, across budgets and
refinement settings, swapping the anchor sampler changes reconstruction quality
by a small, scene-dependent amount and never in a way a particular rule can
exploit. We give a short argument for why this is expected for an LBS-driven
deformation, and why the one exception we observe (tight budget, no refinement)
is exactly where the argument is loosest.

\noindent\textbf{Setup.} IGS advances a frame by deforming every Gaussian center
$p\in\Omega$ with a linear-blend-skinning (LBS) map driven by the
\emph{selected} anchor set $S$ ($|S|=\kappa$):
\begin{equation}
\Phi_S(p)=\!\!\sum_{a\in\mathcal{N}_K^S(p)}\!\! w_a(p)\,\underbrace{\big[R_a\,(p-a)+a+t_a\big]}_{\displaystyle T_a(p)},
\label{eq:lbs}
\end{equation}
where $\mathcal{N}_K^S(p)$ are the $K$ nearest anchors of $p$ in $S$, the blend
weights $w_a(p)\ge0$ form a partition of unity ($\sum_a w_a(p)=1$; in IGS, a
softmax of negative neighbor distances), and $(R_a,t_a)$ is the rigid transform the (fixed) deformation
network predicts at anchor $a$. The sampler chooses \emph{which} points become
anchors; everything else in \cref{eq:lbs} is held fixed. We make three
assumptions, all mild for a smoothly moving scene.

\begin{assumption}[Smooth motion]\label{as:smooth}
The ground-truth per-point motion $\Phi^\star$ is $L_\Phi$-Lipschitz on $\Omega$.
\end{assumption}
\begin{assumption}[Anchor consistency]\label{as:consist}
Each anchor transform is correct at its own location,
$T_a(a)=\Phi^\star(a)$, and $T_a$ is $L_T$-Lipschitz in $p$.
\end{assumption}
\begin{assumption}[Matched coverage]\label{as:cover}
$S$ is a $\delta$-cover: every $p\in\Omega$ has a selected anchor within
$\delta$, so $\|p-a\|\le\rho\,\delta$ for all $a\in\mathcal{N}_K^S(p)$ and a
neighborhood constant $\rho$.
\end{assumption}

\begin{proposition}[Coverage, not identity, controls the deformation]
\label{prop:invariance}
Under \cref{as:smooth,as:consist,as:cover}, for every $p\in\Omega$,
\begin{equation}
\big\|\Phi_S(p)-\Phi^\star(p)\big\|\;\le\;(L_\Phi+L_T)\,\rho\,\delta .
\label{eq:bound}
\end{equation}
Consequently, two samplers that realize covers of radius $\delta$ and $\delta'$
produce deformations with
$\|\Phi_S(p)-\Phi_{S'}(p)\|\le(L_\Phi+L_T)\rho(\delta+\delta')$: the deformation
error is governed by the covering radius each sampler attains, precisely the
quantity a coverage-aware sampler such as FPS minimizes, rather than by any
finer property of \emph{which} points are chosen.
\end{proposition}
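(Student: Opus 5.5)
The plan is to use the partition of unity to turn the deformation error into a convex combination of per-anchor errors, and to bound each of those with a triangle inequality through the anchor location. Fix $p\in\Omega$. Since $\sum_{a\in\mathcal{N}_K^S(p)} w_a(p)=1$ and $w_a(p)\ge 0$, we can write
\[
\Phi_S(p)-\Phi^\star(p)=\sum_{a\in\mathcal{N}_K^S(p)} w_a(p)\,\big[T_a(p)-\Phi^\star(p)\big],
\]
and so $\|\Phi_S(p)-\Phi^\star(p)\|\le\max_{a\in\mathcal{N}_K^S(p)}\|T_a(p)-\Phi^\star(p)\|$. This step uses nothing about the weights beyond nonnegativity and summing to one. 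In particular it ignores the softmax form, which is exactly why the bound cannot depend on which points are chosen.

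For a single neighbor $a$, insert the anchor location and use \cref{as:consist}:
\[
\|T_a(p)-\Phi^\star(p)\|\le\|T_a(p)-T_a(a)\|+\|\Phi^\star(a)-\Phi^\star(p)\|.
\]
The first term is at most $L_T\|p-a\|$ because $T_a$ is $L_T$-Lipschitz. The second is at most $L_\Phi\|p-a\|$ by \cref{as:smooth}. By \cref{as:cover} every $a\in\mathcal{N}_K^S(p)$ satisfies $\|p-a\|\le\rho\delta$. Taking the maximum over neighbors gives \cref{eq:bound}.

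The comparison between two samplers then follows from the triangle inequality through $\Phi^\star$:
\[
\|\Phi_S(p)-\Phi_{S'}(p)\|\le\|\Phi_S(p)-\Phi^\star(p)\|+\|\Phi^\star(p)-\Phi_{S'}(p)\|.
\]
Applying \cref{eq:bound} to each term, with radii $\delta$ and $\delta'$ and the same constants $L_T,\rho$ for both anchor sets, gives $(L_\Phi+L_T)\rho(\delta+\delta')$.

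No single step is hard. The main care needed is in the hypotheses, not the algebra. The Lipschitz constant $L_T$ and the anchor-consistency property must hold uniformly over whichever anchors each sampler picks, since the transforms $(R_a,t_a)$ come from one fixed network evaluated at varying locations. Likewise the neighborhood constant $\rho$ must be the same for $S$ and $S'$. I would state this uniformity explicitly, reading \cref{as:consist,as:cover} as holding for every candidate anchor, so that the comparison bound is legitimate. It is also worth remarking that $T_a(p)=R_a(p-a)+a+t_a$ is affine with $L_T=\|R_a\|=1$ for a rotation. This makes the bound essentially $(L_\Phi+1)\rho\delta$, so the covering radius is the only sampler-dependent quantity.
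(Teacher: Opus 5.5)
Your proposal is correct and follows the paper's proof essentially step for step. You rewrite the error as a convex combination of $T_a(p)-\Phi^\star(p)$ via the partition of unity, split each term through $T_a(a)=\Phi^\star(a)$ into an $L_T$ piece and an $L_\Phi$ piece, apply the cover bound $\|p-a\|\le\rho\delta$, and obtain the two-sampler statement from the triangle inequality through $\Phi^\star$. Your remark that $L_T$, $\rho$, and anchor consistency must hold uniformly over every candidate anchor, so that the same constants apply to both $S$ and $S'$, makes explicit a hypothesis the paper uses only tacitly.
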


\begin{proof}
Using the partition of unity and \cref{as:consist},
\[
\Phi_S(p)-\Phi^\star(p)=\!\!\sum_{a\in\mathcal{N}_K^S(p)}\!\! w_a(p)\big[\,T_a(p)-\Phi^\star(p)\,\big].
\]
For each neighbor $a$, add and subtract $T_a(a)=\Phi^\star(a)$:
\[
T_a(p)-\Phi^\star(p)=\underbrace{[T_a(p)-T_a(a)]}_{\le L_T\|p-a\|}+\underbrace{[\Phi^\star(a)-\Phi^\star(p)]}_{\le L_\Phi\|p-a\|}.
\]
Both terms are bounded via \cref{as:consist,as:smooth}. Since
$\|p-a\|\le\rho\delta$ (\cref{as:cover}) and the weights are a convex
combination, \cref{eq:bound} follows. The two-sampler bound is the triangle
inequality. Only the covering radius $\delta$ enters: once two samplers attain
the same $\delta$ they are interchangeable to $O(\delta)$, and shrinking $\delta$
is exactly what a coverage-aware sampler optimizes, so FPS, which minimizes
$\delta$, is the strongest sampler precisely at small budgets where $\delta$ is large.
\end{proof}

\noindent\textbf{What the bound explains.} \Cref{prop:invariance} bounds the
\emph{deformation} error; we link that to PSNR only heuristically, a smaller
deformation error renders closer to ground truth, rather than through a formal
rendering-error analysis. With that caveat, the proposition reproduces every
qualitative feature of the measured grid (\cref{sec:supp_grid}):
\begin{itemize}\itemsep2pt
\item \textbf{Small and not exploitable.} The gap between any two samplers is
controlled by their covering radii, not by a clever ordering. Once two samplers
cover the scene comparably they are interchangeable to $O(\delta)$, so no fixed
or learned rule is consistently best, exactly what we observe.
\item \textbf{Shrinks with budget.} The covering radius of $\kappa$ anchors
scales as $\delta=\Theta(\kappa^{-1/2})$ (surface) to $\Theta(\kappa^{-1/3})$
(volume), so the bound, and thus the achievable cross-selector PSNR gap, decays
monotonically in $\kappa$.
\item \textbf{Scene-dependent.} The bound scales with $L_\Phi$, the motion
magnitude. \textit{Trimming}, with larger and faster motion, has a larger
$L_\Phi$ and therefore a looser bound and a larger measured spread than
\textit{Discussion} (\cref{tab:supp_grid}).
\item \textbf{Refinement compresses it.} Per-key-frame test-time optimization
re-fits the transforms to observations, directly reducing the residual
$T_a-\Phi^\star$ (effectively $L_T$) regardless of $S$, so the spread tends to
contract once any refinement is applied.
\end{itemize}

\noindent\textbf{The predicted exception.} The bound is loosest when $\delta$ is large
(small $\kappa$) and $L_\Phi$ is large (no refinement to correct it). There,
coverage \emph{quality} matters, and random sampling, which has the highest
covering-radius variance and can leave gaps, should be worst, while a sampler
that explicitly minimizes covering radius (FPS) should be best. This is exactly
the single cell where we measure a real gap: \textit{Trimming} at $\kappa{=}1024$,
$r{=}0$, FPS $20.76$~dB vs.\ random $20.06$~dB (\cref{tab:supp_grid}, top-left).
The theory thus predicts both halves of the empirical story: near-invariance
almost everywhere, and a coverage-aware sampler helping only in the
tight-budget/no-refinement corner that deployment never visits.

% =====================================================================
\section{Algorithms}
\label{sec:supp_algos}

We give the three procedures that define the study: the matched-budget
single-code-path harness (\cref{alg:harness}), the five selectors behind one
interface (\cref{alg:selectors}), and the moving-block bootstrap used for all
confidence intervals (\cref{alg:bootstrap}). Only the boxed selector call and the
budget $\kappa$ change between conditions; the reconstruction and refinement code
is byte-for-byte identical.

\begin{algorithm}[t]
\caption{Matched-budget, single-code-path evaluation}
\label{alg:harness}
\small
\begin{algorithmic}[1]
\Require stream $\{F_t\}$, selector $\sigma$, budget $\kappa$, refine iters $r$
\State init anchors from key frame; $\mathcal{M}\gets\varnothing$
\For{each frame $F_t$}
  \State $C_t\gets$ candidate Gaussians (pool, identical $\forall\sigma$)
  \State $S_t\gets \sigma(C_t,\kappa)$ \Comment{\emph{only} line that varies}
  \State $\{(R_a,t_a)\}\gets \textsc{DeformNet}(S_t)$ \Comment{fixed weights}
  \State $\hat{F}_t\gets \textsc{LBS-Render}(S_t,\{(R_a,t_a)\})$ \Comment{\cref{eq:lbs}}
  \If{$t \bmod 5 = 0$} \Comment{key frame}
    \State $\hat{F}_t\gets \textsc{Refine}(\hat{F}_t, r\text{ iters})$
  \EndIf
  \State $\mathcal{M}\gets\mathcal{M}\cup\{\textsc{PSNR}(\hat{F}_t,F_t^{\text{held-out}})\}$
\EndFor
\State \Return per-frame metrics $\mathcal{M}$
\end{algorithmic}
\end{algorithm}

\begin{algorithm}[t]
\caption{Selectors under a common interface $\sigma(C,\kappa)$}
\label{alg:selectors}
\small
\begin{algorithmic}[1]
\Function{FPS}{$C,\kappa$} \Comment{IGS default; bucketed kd-line}
  \State \Return farthest-point subset of size $\kappa$
\EndFunction
\Function{Random}{$C,\kappa$}
  \State \Return $\kappa$ points drawn uniformly without replacement
\EndFunction
\Function{Uniform}{$C,\kappa$} \Comment{voxel-stratified}
  \State voxelize $C$; draw $\propto$ per-voxel occupancy to size $\kappa$
\EndFunction
\Function{Importance}{$C,\kappa$} \Comment{content-aware heuristic}
  \State score $s_i\gets \mathrm{opacity}_i\times\mathrm{scale}_i$; \Return top-$\kappa$
\EndFunction
\Function{RL}{$C,\kappa$} \Comment{learned policy}
  \State $C'\gets$ truncate $C$ to cap $M_{\max}$
  \State $z\gets\textsc{Transformer}(\textsc{PointMLP}(C'))$
  \State \Return top-$\kappa$ by score $z$ \Comment{$\kappa{=}M_{\max}\!\Rightarrow$ capped set}
\EndFunction
\end{algorithmic}
\end{algorithm}

\begin{algorithm}[t]
\caption{Moving-block bootstrap CI for a paired PSNR gap}
\label{alg:bootstrap}
\small
\begin{algorithmic}[1]
\Require per-frame PSNR $a_{1:n}$ (method $A$), $b_{1:n}$ ($B$), block $L{=}15$, $N{=}10^4$
\State $d_i\gets a_i-b_i$ for $i=1{:}n$ \Comment{pair on common frames}
\For{$j=1$ \textbf{to} $N$}
  \State tile blocks of length $L$ from $\{d_i\}$ until $\ge n$ samples; truncate to $n$
  \State $\bar d^{(j)}\gets$ mean of the resampled sequence
\EndFor
\State \Return $\big(\bar d,\ [\,Q_{2.5},Q_{97.5}\,]\text{ of }\{\bar d^{(j)}\}\big)$
\Statex \Comment{$A,B$ indistinguishable if the interval contains $0$}
\end{algorithmic}
\end{algorithm}

% =====================================================================
\section{The Full Budget$\times$Refinement Grid}
\label{sec:supp_grid}

\cref{tab:supp_grid} gives the per-cell best-minus-worst PSNR spread over the
four standard samplers (FPS, random, uniform, importance) for every budget and
refinement setting on both MeetingRoom scenes, the data condensed into the
teaser heatmap of the main paper. \textit{Discussion} is near-invariant
throughout (median $0.13$~dB, max $0.47$); \textit{Trimming} is the most
sensitive scene (median $0.30$, max $0.70$), with the largest gap at the
tight-budget/no-refinement corner predicted by \cref{prop:invariance}. The
spread does not fall off perfectly monotonically, the partial-refinement
($r{=}5$) column is the noisiest, but the smallest spreads consistently occur at
the largest budgets, and no sampler is the best across cells (\cref{tab:supp_frontier}).

\begin{table}[t]
\centering
\caption{\textbf{Cross-selector PSNR spread (dB), full grid.} Best-minus-worst
over FPS/random/uniform/importance per cell; $299$ held-out frames per condition.
Rows: budget $\kappa$; columns: refinement iterations $r$.}
\label{tab:supp_grid}
\footnotesize
\setlength{\tabcolsep}{4.5pt}
\renewcommand{\arraystretch}{1.05}
\begin{tabular}{l cccc}
\toprule
$\kappa\;\backslash\;r$ & $0$ & $5$ & $25$ & $50$ \\
\midrule
\multicolumn{5}{l}{\emph{Discussion} \;(median $0.13$, max $0.47$)}\\
$1024$ & 0.117 & 0.347 & 0.083 & 0.050 \\
$2048$ & 0.138 & 0.191 & 0.127 & 0.202 \\
$4096$ & 0.046 & 0.196 & 0.100 & 0.125 \\
$8192$ & 0.040 & 0.471 & 0.122 & 0.140 \\
\midrule
\multicolumn{5}{l}{\emph{Trimming} \;(median $0.30$, max $0.70$)}\\
$1024$ & \textbf{0.697} & 0.093 & 0.145 & 0.086 \\
$2048$ & 0.522 & 0.430 & 0.355 & 0.151 \\
$4096$ & 0.346 & 0.495 & 0.272 & 0.332 \\
$8192$ & 0.208 & 0.472 & 0.207 & 0.050 \\
\bottomrule
\end{tabular}
\end{table}

% =====================================================================
\section{Per-Selector Quality Frontiers}
\label{sec:supp_frontier}

\cref{tab:supp_frontier} lists every selector at every budget for both
MeetingRoom scenes, with $\Delta$ measured against that scene's FPS@8192
reference. Two facts stand out. First, on both scenes the cheap samplers track
FPS to within the measurement floor for $\kappa\ge2048$; \emph{Trimming} even
prefers FPS@4096 ($+0.27$~dB) while ranking FPS@8192 below random and importance,
confirming that no single budget--sampler pair dominates. Second, the learned
policy (RL, \emph{Discussion}) is at parity only at its capped-set operating
point ($\kappa{=}M_{\max}{=}4096$); with smaller budgets it falls to $-0.38$
and $-1.13$~dB, the post-mortem of the main paper. RL is shown shaded, not
highlighted: its MeetingRoom parity does not transfer into a stable ranking
rule.

\begin{table}[t]
\centering
\caption{\textbf{Per-selector frontiers, MeetingRoom.} PSNR and
$\Delta$ vs.\ the scene's FPS@8192 reference; $299$ held-out frames.
\textit{Discussion} at the deployed protocol; \textit{Trimming} at full
refinement ($r{=}50$). RL rows are its learned-policy operating points.}
\label{tab:supp_frontier}
\scriptsize
\setlength{\tabcolsep}{3.2pt}
\renewcommand{\arraystretch}{1.04}
\begin{tabular}{l c c c | c c c}
\toprule
& \multicolumn{3}{c|}{\textbf{Discussion}} & \multicolumn{3}{c}{\textbf{Trimming}}\\
\textbf{Method} & $\kappa$ & PSNR & $\Delta$ & $\kappa$ & PSNR & $\Delta$ \\
\midrule
FPS    & 8192 & 30.932 & \;0.000 & 8192 & 31.946 & \;0.000 \\
FPS    & 4096 & 30.925 & $-0.007$ & 4096 & 32.214 & $+0.268$ \\
FPS    & 2048 & 30.900 & $-0.032$ & 2048 & 32.041 & $+0.095$ \\
FPS    & 1024 & 30.860 & $-0.072$ & 1024 & 31.980 & $+0.034$ \\
\midrule
Random & 8192 & 30.934 & $+0.002$ & 8192 & 31.994 & $+0.048$ \\
Random & 4096 & 30.877 & $-0.055$ & 4096 & 31.882 & $-0.064$ \\
Random & 2048 & 30.868 & $-0.064$ & 2048 & 32.000 & $+0.054$ \\
Random & 1024 & 30.788 & $-0.144$ & 1024 & 31.967 & $+0.021$ \\
\midrule
Uniform & 8192 & 30.884 & $-0.048$ & 8192 & 31.972 & $+0.026$ \\
Uniform & 4096 & 30.887 & $-0.045$ & 4096 & 31.894 & $-0.052$ \\
Uniform & 2048 & 30.937 & $+0.005$ & 2048 & 32.109 & $+0.163$ \\
Uniform & 1024 & 30.777 & $-0.155$ & 1024 & 32.053 & $+0.107$ \\
\midrule
Import. & 8192 & 30.929 & $-0.003$ & 8192 & 31.996 & $+0.050$ \\
Import. & 4096 & 30.835 & $-0.097$ & 4096 & 31.965 & $+0.019$ \\
Import. & 2048 & 30.749 & $-0.183$ & 2048 & 31.958 & $+0.012$ \\
Import. & 1024 & 30.915 & $-0.017$ & 1024 & 32.010 & $+0.064$ \\
\midrule
\egsrow RL & 4096 & 30.956 & $+0.024$ & \multicolumn{3}{c}{\emph{capped set ($\kappa{=}M_{\max}$)}}\\
\egsrow RL & 2048 & 30.557 & $-0.375$ & \multicolumn{3}{c}{\emph{ranks below cap}}\\
\egsrow RL & 1024 & 29.805 & $-1.127$ & \multicolumn{3}{c}{\emph{ranks below cap}}\\
\bottomrule
\end{tabular}
\end{table}

\noindent\textbf{Trimming equivalence test.} \cref{tab:supp_trim_tost} runs the paired
TOST of \cref{sec:supp_bootstrap} on \textit{Trimming} at full refinement, with
the scene-specific block length $L{=}50$. Every deployment-budget selector is
statistically equivalent to FPS@8192 within the sub-perceptual margin
$\delta{=}0.25$~dB. The lone non-equivalent entry is FPS@4096, which clears the
margin only by being \emph{better} ($+0.27$~dB); this is the budget non-monotonicity also
visible in \cref{tab:supp_frontier}, not a sign that selection matters.

\begin{table}[t]
\centering
\caption{\textbf{Trimming equivalence test (TOST).} Paired moving-block CIs vs.\
the FPS@8192 reference at full refinement ($r{=}50$); $L{=}50$ (set from
Trimming's IACT${\approx}47$), $90\%$ interval (the TOST interval for SESOI
$\delta{=}0.25$~dB). ``equiv'' holds when the interval lies within $\pm\delta$.
The sole non-equivalent entry, FPS@4096, exceeds the margin on the \emph{better}
side ($+0.27$~dB), a budget effect rather than selection sensitivity.}
\label{tab:supp_trim_tost}
\scriptsize
\setlength{\tabcolsep}{4.0pt}
\renewcommand{\arraystretch}{1.05}
\begin{tabular}{l c c c}
\toprule
\textbf{Method} & $\kappa$ & $\Delta$ \textbf{vs FPS@8192 (90\% CI)} & \textbf{equiv} \\
\midrule
FPS        & 4096 & $+0.268$ $[+0.164,+0.351]$ & no (better) \\
Random     & 4096 & $-0.064$ $[-0.188,+0.070]$ & yes \\
Uniform    & 4096 & $-0.052$ $[-0.112,+0.000]$ & yes \\
Import.    & 4096 & $+0.019$ $[-0.107,+0.150]$ & yes \\
Random     & 8192 & $+0.048$ $[-0.047,+0.154]$ & yes \\
Uniform    & 8192 & $+0.026$ $[-0.112,+0.157]$ & yes \\
Import.    & 8192 & $+0.050$ $[-0.101,+0.151]$ & yes \\
\bottomrule
\end{tabular}
\end{table}

% =====================================================================
\section{N3DV Results}
\label{sec:supp_n3dv}

We report two N3DV checks. \cref{tab:supp_n3dv_grid} gives the four-scene
relative spread used for the grid analysis. \cref{tab:supp_n3dv_frontier} gives
the latest held-out validation-set metrics used in the main table.
For the validation-set table, RL uses the N3DV checkpoint at each reported
budget. The four-scene spread over the standard selectors has median
$0.06$~dB, matching MeetingRoom's \textit{Discussion}.
The one consistently larger column is \textit{Coffee Martini} at tight budgets,
where FPS runs slightly high; it tends to shrink with budget (non-monotonic on Coffee Martini).

\begin{table}[t]
\centering
\caption{\textbf{N3DV four-scene cross-selector PSNR spread (dB).}
Best-minus-worst over FPS/random/uniform/importance at each budget; $31$ held-out
frames per scene. Median over the four deployment budgets ($1024$--$8192$) and
four scenes $=0.06$~dB.}
\label{tab:supp_n3dv_grid}
\scriptsize
\setlength{\tabcolsep}{4.5pt}
\renewcommand{\arraystretch}{1.05}
\begin{tabular}{l cccc}
\toprule
$\kappa$ & \textit{Coffee} & \textit{Cook} & \textit{Salmon} & \textit{Steak} \\
\midrule
$1024$  & 0.294 & 0.037 & 0.155 & 0.046 \\
$2048$  & 0.357 & 0.057 & 0.062 & 0.070 \\
$4096$  & 0.251 & 0.022 & 0.090 & 0.049 \\
$8192$  & 0.186 & 0.030 & 0.057 & 0.046 \\
$16384$ & 0.091 & 0.016 & 0.056 & 0.026 \\
\bottomrule
\end{tabular}
\end{table}

\cref{tab:supp_n3dv_frontier} matches the N3DV validation-set block in the main
paper.

\begin{table}[t]
\centering
\caption{\textbf{N3DV validation-set frontier.} PSNR, LPIPS, and DSSIM for the
same held-out validation set reported in the main paper. RL rows use the N3DV
checkpoint at the reported budgets.}
\label{tab:supp_n3dv_frontier}
\scriptsize
\setlength{\tabcolsep}{4.0pt}
\renewcommand{\arraystretch}{1.04}
\begin{tabular}{l c c c c}
\toprule
\textbf{Method} & $\kappa$ & PSNR$\uparrow$ & LPIPS$\downarrow$ & DSSIM$\downarrow$ \\
\midrule
FPS    & 8192 & 31.698 & 0.1070 & 0.02309 \\
FPS    & 4096 & 31.702 & 0.1068 & 0.02297 \\
FPS    & 2048 & 31.646 & 0.1066 & 0.02298 \\
FPS    & 1024 & 31.699 & 0.1063 & 0.02302 \\
\midrule
Random & 8192 & 31.792 & 0.1056 & 0.02285 \\
Random & 4096 & 31.713 & 0.1060 & 0.02298 \\
Random & 2048 & 31.830 & 0.1057 & 0.02270 \\
Random & 1024 & 31.669 & 0.1061 & 0.02298 \\
\midrule
Uniform & 8192 & 31.651 & 0.1063 & 0.02302 \\
Uniform & 4096 & 31.692 & 0.1065 & 0.02295 \\
Uniform & 2048 & 31.845 & 0.1058 & 0.02282 \\
Uniform & 1024 & 31.760 & 0.1059 & 0.02282 \\
\midrule
Import. & 8192 & 31.582 & 0.1072 & 0.02322 \\
Import. & 4096 & 31.620 & 0.1065 & 0.02302 \\
Import. & 2048 & 31.614 & 0.1068 & 0.02307 \\
Import. & 1024 & 31.684 & 0.1062 & 0.02295 \\
\midrule
\egsrow RL & 8192 & 31.447 & 0.1089 & 0.02368 \\
\egsrow RL & 4096 & 32.501 & 0.0979 & 0.02121 \\
\egsrow RL & 2048 & 33.675 & 0.0860 & 0.01928 \\
\egsrow RL & 1024 & 33.846 & 0.0843 & 0.01891 \\
\bottomrule
\end{tabular}
\end{table}

% =====================================================================
\section{Cross-Backbone Detail: 3DGStream}
\label{sec:supp_3dgs}

This section expands the second-backbone check of the main paper. On
3DGStream, which advances the scene by gradient-driven densification through a
neural transform cache rather than by anchor-blended skinning, ``selection''
is which Gaussians the per-frame update acts on. We replace the native
gradient-magnitude rule with random and FPS at a matched budget under one code
path, exactly as on IGS, and ask whether the selection rule changes quality on a
mechanism unlike LBS, a test of whether \cref{prop:invariance}'s coverage
argument is about density rather than the specific update rule.

\cref{tab:supp_3dgs} reports the per-rule means and paired moving-block intervals. At quality in 3DGStream's normal N3DV range, the three selection rules differ by at most $0.02$~dB and every paired $95\%$ interval contains zero, the same selection invariance we measure on IGS, now on a backbone whose update mechanism shares nothing with LBS.

\begin{table}[t]
\centering
\caption{\textbf{Cross-backbone selector comparison (3DGStream).} Mean PSNR over the $50$ matched streamed frames of one N3DV scene (filter-corrected initialization, normal quality range), matched budget, single code path. $\Delta$ and paired $95\%$ CI are vs.\ the native gradient rule (moving-block bootstrap, $L{=}10$). A confirmatory single-scene check, not a second grid.}
\label{tab:supp_3dgs}
\footnotesize
\setlength{\tabcolsep}{5pt}
\renewcommand{\arraystretch}{1.05}
\begin{tabular}{l c c c}
\toprule
\textbf{Selection rule} & \textbf{PSNR} $\uparrow$ & $\Delta$ vs.\ grad & \textbf{95\% CI} \\
\midrule
Gradient (native) & 32.69 & \;\;\;0.00 & n/a \\
Random            & 32.70 & $+0.01$ & $[-0.04,+0.04]$ \\
FPS               & 32.71 & $+0.02$ & $[-0.02,+0.06]$ \\
\midrule
\multicolumn{4}{l}{best$-$worst spread $=0.02$~dB; all intervals contain $0$} \\
\bottomrule
\end{tabular}
\end{table}

% =====================================================================
\section{Moving-Block Bootstrap: Full Procedure}
\label{sec:supp_bootstrap}

All intervals use the moving-block bootstrap of \cref{alg:bootstrap}: we resample
contiguous blocks rather than individual frames because adjacent streamed frames
are strongly correlated, and an i.i.d.\ frame bootstrap understates the interval.

\noindent\textbf{Block length from autocorrelation.} We set $L$ from the integrated
autocorrelation time (IACT) of per-frame PSNR, estimated from the empirical
autocorrelation function: IACT${\approx}6$ frames on \textit{Discussion}
(autocorrelation drops below $0.1$ by lag $7$) and ${\approx}47$ on the more
slowly varying \textit{Trimming}, so we use $L{=}15$ and $L{=}50$ respectively,
both comfortably above the correlation length. The choice is not load-bearing:
the $95\%$ gap-CI width on \textit{Discussion} is flat for $L\in[10,50]$ (within
$0.29\pm0.01$~dB), and every equivalence verdict is unchanged at $L{=}50$.

\noindent\textbf{Paired test and equivalence (TOST).} For a pair we resample the
\emph{paired} per-frame difference on the frames the two samplers share, which
cancels per-frame scene difficulty and tightens the gap interval. Rather than
read non-significance off a zero-containing interval, we run two one-sided tests
against a sub-perceptual margin $\delta{=}0.25$~dB: two samplers are
\emph{equivalent} when the bootstrap gap interval lies within $\pm\delta$. At
deployment budgets every cheap sampler (random, uniform, importance), and the
MeetingRoom learned-policy parity point, is equivalent to FPS@8192 by this test on
both scenes (\textit{Trimming} in \cref{tab:supp_trim_tost}); the only
non-equivalent deployment-budget entry is FPS@4096 on \textit{Trimming}, which is
\emph{better} than FPS@8192.

\noindent\textbf{Noise floor for the grid.} The four-sampler best-minus-worst spread of
\cref{tab:supp_grid} is a positively biased order statistic, nonzero even under a
true null. We bootstrap it under the null, four independent estimates of a single
sampler, obtaining a median spread of $0.32$~dB on \textit{Discussion} and
$0.86$~dB on \textit{Trimming} (the latter larger because of Trimming's stronger
autocorrelation and hence fewer effective frames). Rather than compare to the
floor's median alone, we locate the observed spread in the null distribution: it
falls at the $6$th percentile on \textit{Discussion} and the $2$nd on
\textit{Trimming}, i.e.\ pure measurement noise produces a spread at least as
large as observed in $94\%$ and $98\%$ of resamples. The cross-sampler variation
is thus statistically within what measurement noise alone produces.

% =====================================================================
\section{Selector Forward-Pass Latency}
\label{sec:supp_latency}
\cref{tab:fwd_latency} reports the full forward-pass latency sweep summarized by the latency figure in the main paper. The learned selector is exported to TensorRT (CUDA graph, deployed engine); the figure quoted there is the larger-cap ($M_{\max}{=}16{,}384$) export used as a scaling microbenchmark, since the deployed $M_{\max}{=}4096$ checkpoint cannot emit $\kappa{>}4096$. The TensorRT forward is pool-independent (it depends only on the candidate count, capped at $16{,}384$); both FPS variants grow with budget and pool. These are forward-pass numbers only, not the in-pipeline selection stage (which adds feature extraction).

\begin{table}[t]
\centering
\caption{\textbf{Selector forward-pass latency} (A100, $100$k candidate pool). RL-TRT is the TensorRT export; FPS-CPU is IGS's bucketed kd-line FPS; FPS-GPU is exact \texttt{torch\_cluster.fps}. Speedups are RL-TRT relative to each FPS variant. The exact GPU FPS is slower than the optimized CPU routine because farthest-point selection is a sequential recurrence that GPU parallelism does not accelerate.}
\label{tab:fwd_latency}
\scriptsize
\setlength{\tabcolsep}{3.4pt}
\renewcommand{\arraystretch}{1.06}
\begin{tabular}{c c c c c c}
\toprule
\textbf{Anch.} & \textbf{RL-TRT} & \textbf{FPS-CPU} & \textbf{FPS-GPU} & \textbf{Spd. vs} & \textbf{Spd. vs} \\
$\kappa$ & \textbf{(ms)} $\downarrow$ & \textbf{(ms)} & \textbf{(ms)} & \textbf{CPU} & \textbf{GPU} \\
\midrule
1024  & \textbf{0.185} & 29.0  & 204.3  & 157$\times$ & 1104$\times$ \\
2048  & \textbf{0.210} & 42.8  & 408.6  & 204$\times$ & 1946$\times$ \\
4096  & \textbf{0.341} & 67.0  & 817.3  & 196$\times$ & 2397$\times$ \\
8192  & \textbf{0.834} & 109.4 & 1634.7 & 131$\times$ & 1960$\times$ \\
16384 & \textbf{2.742} & 186.1 & 3269.2 & 68$\times$  & 1192$\times$ \\
\bottomrule
\end{tabular}
\end{table}

% =====================================================================
\section{Training and Evaluation Hyperparameters}
\label{sec:supp_hyperparams}
\cref{tab:supp_hyperparams} lists the values used for the learned policy and the
strict high-quality validation path. These are provided for reproducibility; the
learned policy is analyzed, not proposed, in the main paper.

\begin{table}[t]
\centering
\caption{\textbf{Learned-policy and high-quality refinement settings.}}
\label{tab:supp_hyperparams}
\scriptsize
\setlength{\tabcolsep}{3.0pt}
\renewcommand{\arraystretch}{1.04}
\begin{tabular}{p{0.44\linewidth} p{0.48\linewidth}}
\toprule
\textbf{Parameter} & \textbf{Value} \\
\midrule
Candidate cap $M_{\max}$ & 4096 \\
Deployment buckets & 512, 1024, 2048, 4096 \\
Point descriptor dimension & 13 \\
Transformer layers / heads / dim & 2 / 4 / 128 \\
Dropout & 0.1 \\
Visual feature views & 2 \\
Policy epochs / steps & 4 / 1600 \\
Max batches per epoch & 400 \\
Sampler learning rate & $10^{-4}$ \\
Weight decay / gradient clip & 0.01 / 1.0 \\
Teacher anchor size & 4096 \\
Warm-start epochs & 0 \\
$\delta$ PSNR tolerance & 0.12 \\
$\lambda_{\kappa}$ / $\lambda_t$ / $\lambda_v$ / $\lambda_g$ & 0.20 / 0.60 / 8.0 / 0.25 \\
Entropy coefficient $\beta$ & 0.01 \\
HQ refinement iterations & 10 \\
HQ refine / score views & 2 / 1 \\
Densification & enabled \\
Densify interval / threshold & 5 / 0.00015 \\
Max refined Gaussians & 40000 \\
\bottomrule
\end{tabular}
\end{table}

% =====================================================================
\section{Qualitative Diagnostics}
\label{sec:supp_qualitative}
These panels visualize the central finding on \emph{real} reconstructions across all three datasets. The blink tests (\cref{fig:supp_blink,fig:supp_blink_disc,fig:supp_blink_n3dv}) show all five samplers' renders of one held-out key-frame side by side, visually indistinguishable on \textit{Trimming}, \textit{Discussion}, and N3DV alike, even though the samplers place anchors completely differently and impose very different per-anchor load (quantified in the main paper's back-projection figure). \cref{fig:supp_effect} confirms the resulting output difference is indistinguishable from reconstruction noise. All renders are at budget $4{,}096$.

\begin{figure*}[t]
  \centering
  \begin{subfigure}{\linewidth}\centering
    \includegraphics[width=\linewidth]{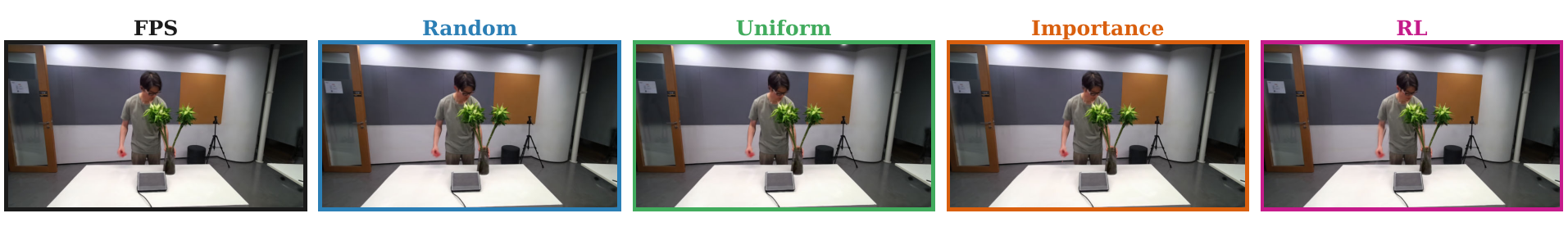}
    \caption{MeetingRoom \textit{Trimming} (key-frame $46$).}\label{fig:supp_blink}
  \end{subfigure}\\[4pt]
  \begin{subfigure}{\linewidth}\centering
    \includegraphics[width=\linewidth]{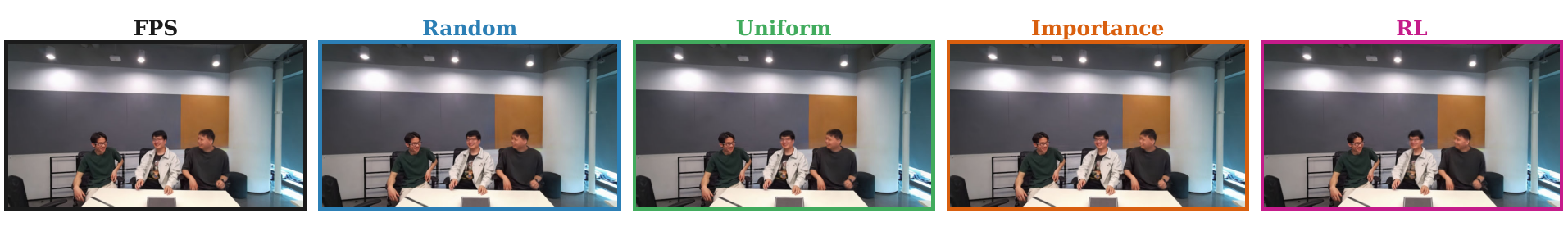}
    \caption{MeetingRoom \textit{Discussion} (key-frame $36$).}\label{fig:supp_blink_disc}
  \end{subfigure}\\[4pt]
  \begin{subfigure}{\linewidth}\centering
    \includegraphics[width=\linewidth]{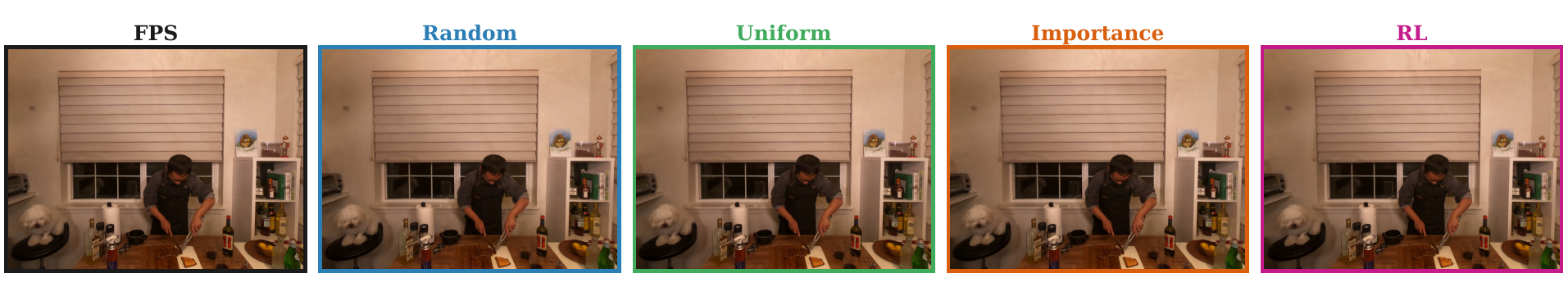}
    \caption{N3DV \textit{cut\_roasted\_beef} (key-frame $6$).}\label{fig:supp_blink_n3dv}
  \end{subfigure}
  \caption{\textbf{The blink test, all three datasets} (budget $4{,}096$). All five samplers' renders of one held-out key-frame, side by side. The reconstructions are visually indistinguishable on every scene, though the samplers place anchors completely differently; the statistical equivalence is established in the main paper. Per-frame PSNRs are omitted (a single held-out camera does not reflect the all-frames equivalence).}
  \label{fig:supp_blink_all}
\end{figure*}

\begin{figure*}[p]
  \centering
  \includegraphics[width=\linewidth,height=0.93\textheight,keepaspectratio]{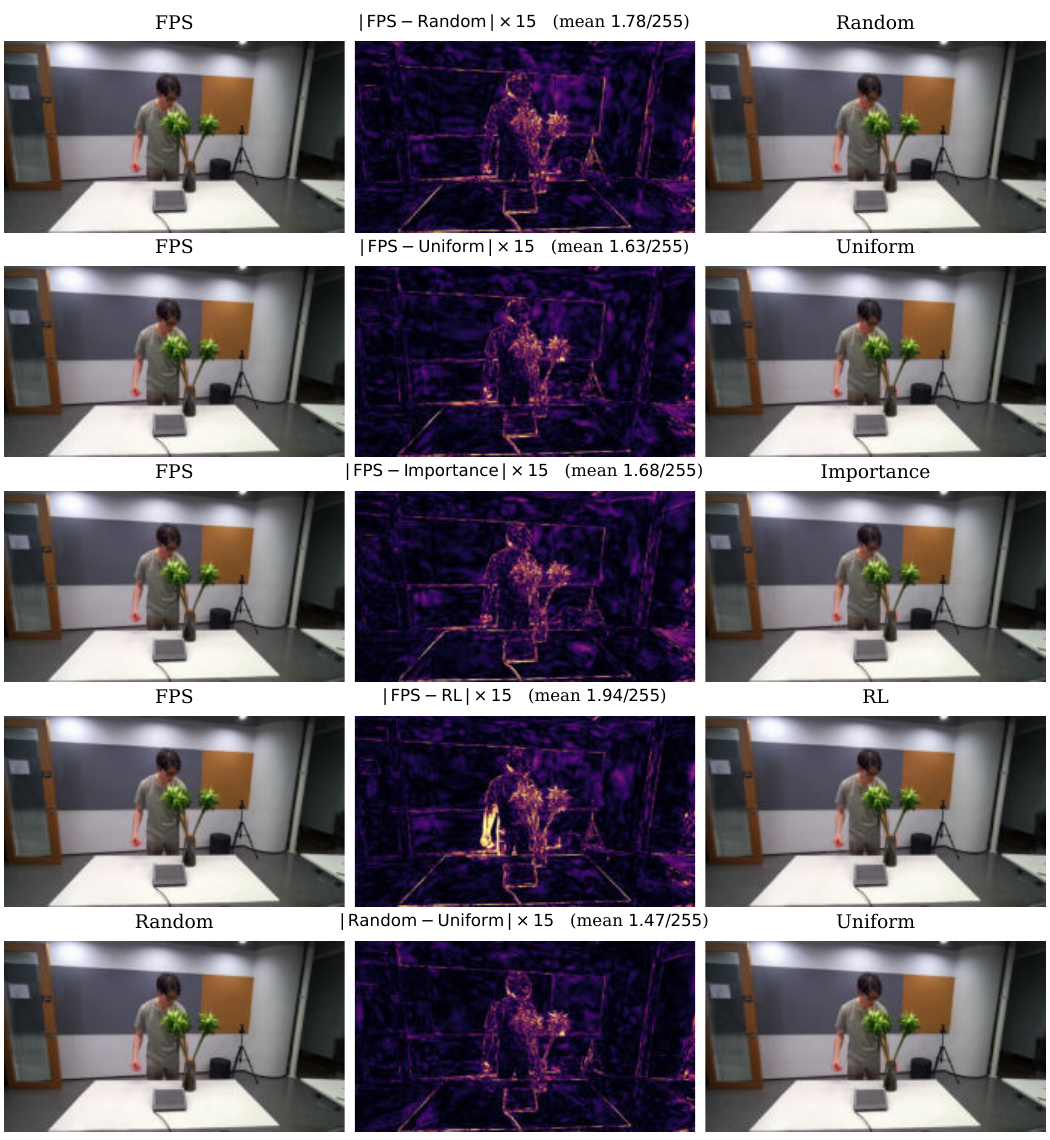}
  \caption{\textbf{Pairwise output differences are indistinguishable from noise (1 of 2).} For five of the ten selector pairs, each row is [\,render A $\mid$ $|\text{A}-\text{B}|\times15$ $\mid$ render B\,] on the held-out \textit{Trimming} key-frame $46$ (budget $4{,}096$; \texttt{inferno}, $\times15$ amplification). Every difference is near-empty, only faint high-frequency reconstruction noise along edges. Mean $|\Delta|$ stays in $1.4$--$1.9/255$ across pairs.}
  \label{fig:supp_effect}
\end{figure*}

\begin{figure*}[p]
  \centering
  \includegraphics[width=\linewidth,height=0.93\textheight,keepaspectratio]{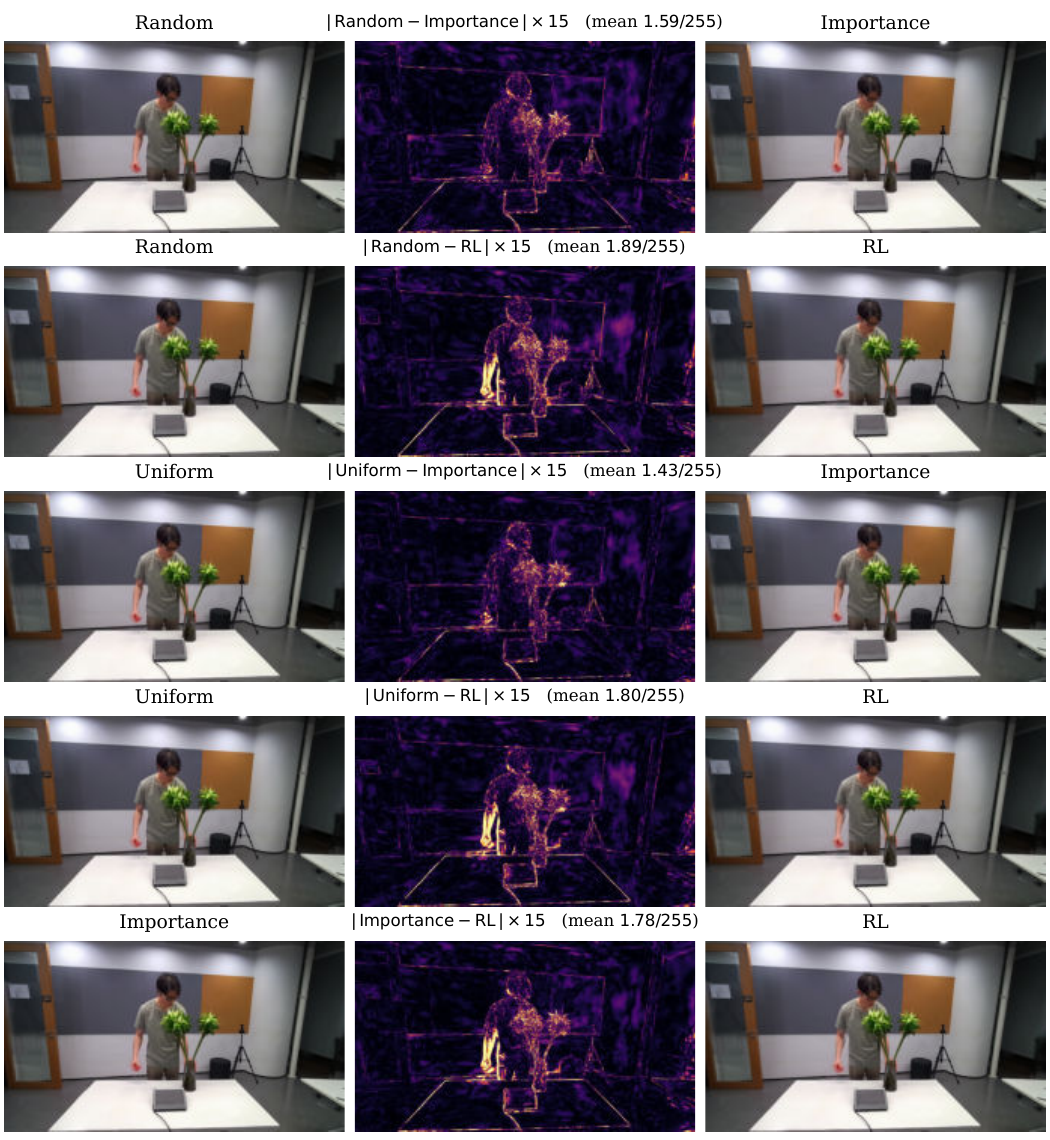}
  \caption{\textbf{Pairwise output differences are indistinguishable from noise (2 of 2).} The remaining five selector pairs (same layout and scaling as \cref{fig:supp_effect}). The largest mean difference across all ten pairs is $1.94/255$ (FPS--RL) and the smallest $1.43/255$ (Uniform--Importance); even the learned policy versus the others differs only by noise. The perceptual metrics tie accordingly (LPIPS $\le 0.0007$, DSSIM $\le 0.0004$).}
  \label{fig:supp_effect2}
\end{figure*}

% =====================================================================
\section{Back-Projection Galleries}
\label{sec:supp_backproj}
For each scene we show two held-out key-frames. Within a key-frame the rows are, top to bottom: the refined render under each selector (visually identical); the back-projected \emph{selected anchors} as points colored by load; and the smooth $K{=}8$ LBS \emph{load field} (the main paper's Fig.~3 visualization). Columns are the five selectors. The renders coincide while the anchor placement and per-anchor load differ sharply, the learned policy and the opacity-scale heuristic concentrating load on a few hot control points and random/uniform spreading it. Mean load is fixed by $K\times(\#\text{candidates})/\kappa$ and is equal across selectors, so the contrast is in the \emph{peak} load (shown per panel).

\begin{figure*}[p]
  \centering
  \includegraphics[width=\linewidth,height=0.92\textheight,keepaspectratio]{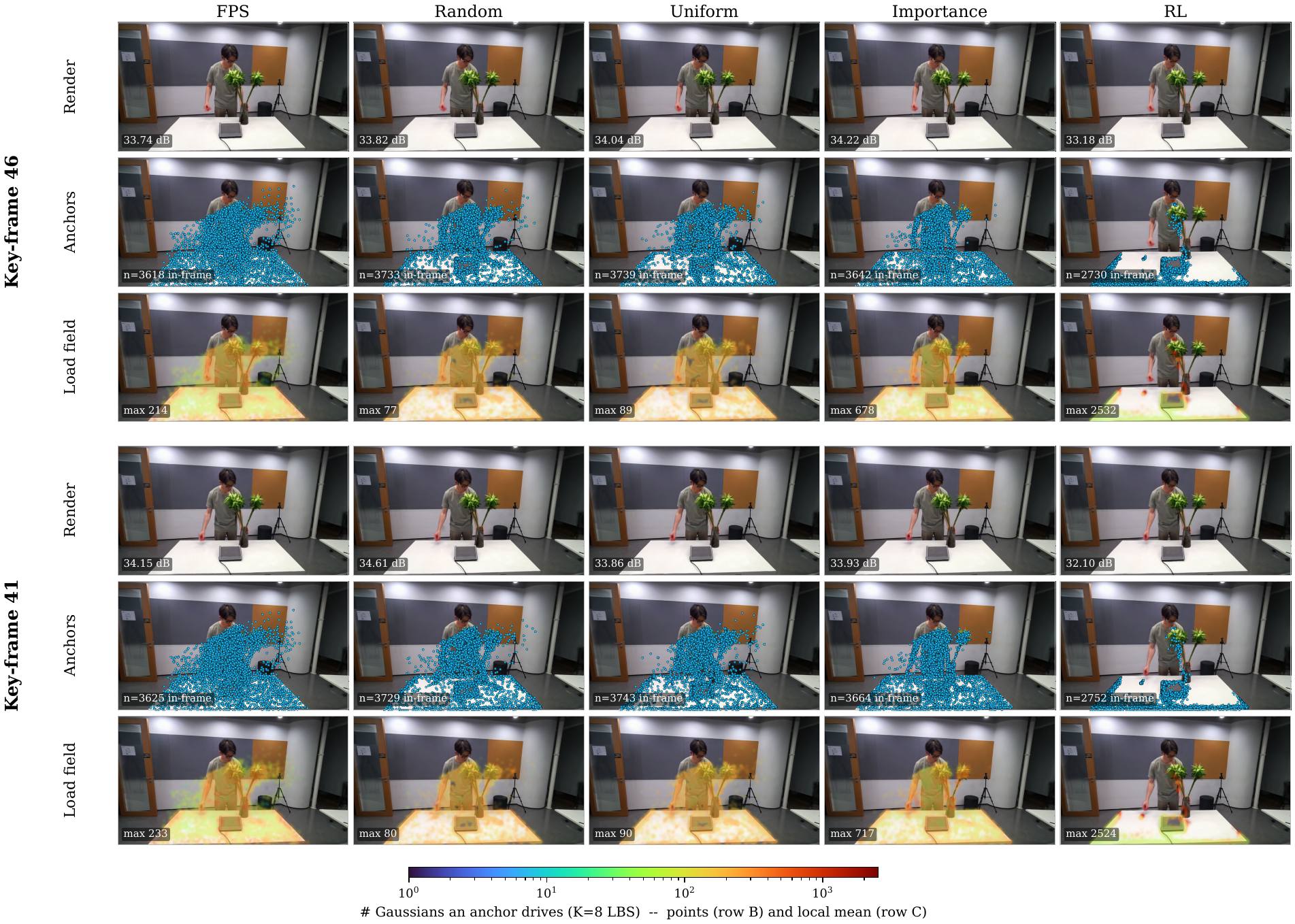}
  \caption{\textbf{Back-projection gallery: MeetingRoom \textit{Trimming}} (key-frames $46$ and $41$). Render $\rightarrow$ selected anchors $\rightarrow$ smooth load field, for all five selectors. The reconstructions are indistinguishable; the load distributions are not.}
  \label{fig:supp_bp_trimming}
\end{figure*}

\begin{figure*}[p]
  \centering
  \includegraphics[width=\linewidth,height=0.92\textheight,keepaspectratio]{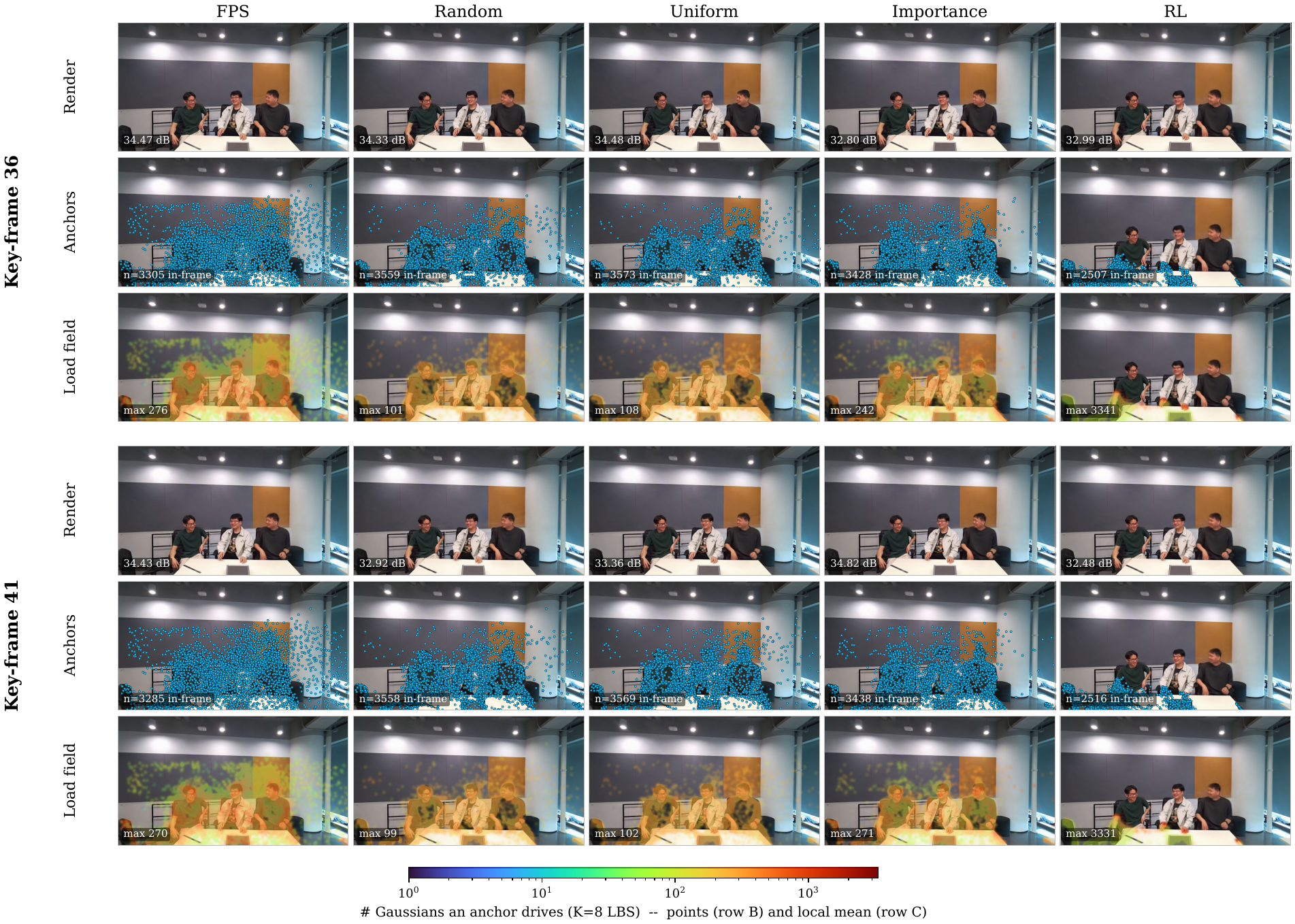}
  \caption{\textbf{Back-projection gallery: MeetingRoom \textit{Discussion}} (key-frames $36$ and $41$). Same layout as \cref{fig:supp_bp_trimming}; the cross-selector invariance and the concentrate-versus-spread load contrast both persist on the second MeetingRoom scene.}
  \label{fig:supp_bp_discussion}
\end{figure*}

\begin{figure*}[p]
  \centering
  \includegraphics[width=\linewidth,height=0.92\textheight,keepaspectratio]{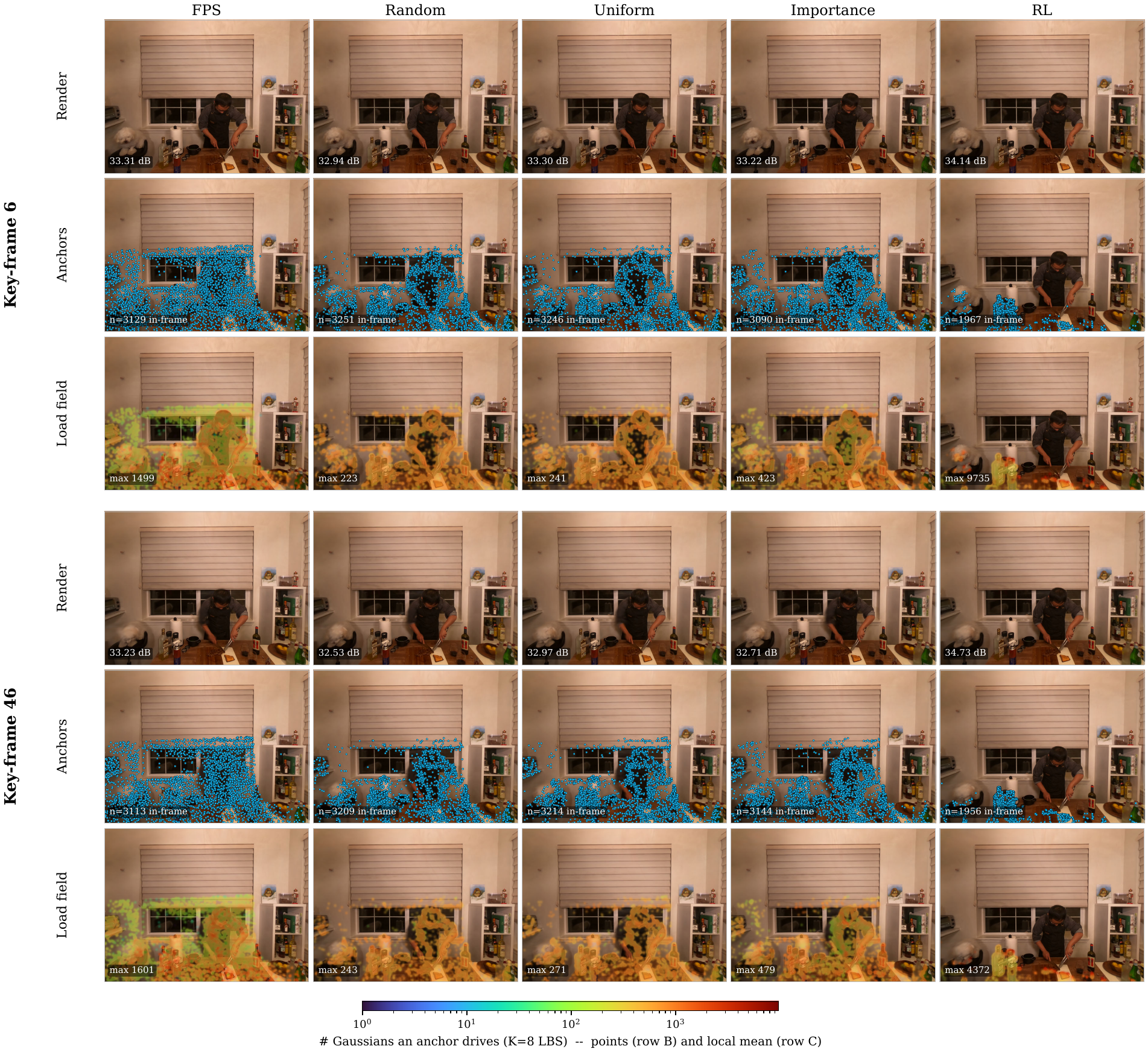}
  \caption{\textbf{Back-projection gallery: N3DV (\textit{cut\_roasted\_beef})} (key-frames $6$ and $46$). Same layout as \cref{fig:supp_bp_trimming}. On N3DV the load is higher overall (denser candidate pool) and the learned policy concentrates most aggressively (peak ${>}9{,}000$ Gaussians driven by one anchor), yet the renders remain indistinguishable.}
  \label{fig:supp_bp_n3dv}
\end{figure*}